\newtheorem{theorem}{Theorem}
\newtheorem{lemma}{Lemma}
\newtheorem{definition}{Definition}
\newcommand{\minihead}[1]{{\vspace{.45em}\noindent\textbf{#1.} }}
\DeclareMathOperator*{\argmin}{arg\,min}
\newcommand{\normsq}[1]{\| #1 \|_2^2}
\newcommand{\norm}[1]{{\left\| #1 \right\|_{2}}}
\newcommand{\lpnorm}[2]{\left\| #1 \right\|_{#2}}
\newcommand*{\E}{\mathbb{E}}
\renewcommand{\cite}[1]{\citep{#1}}
\newcommand{\lonenorm}[1]{{\| #1 \|_{1}}}
\let\b\relax
\newcommand{\b}[1]{\mathbf{#1}}
\begin{document}
\title{Sketching Linear Classifiers over Data Streams}

\author{Kai Sheng Tai, Vatsal Sharan, Peter Bailis, Gregory Valiant \\[0.25ex] \texttt{\{kst, vsharan, pbailis, valiant\}@cs.stanford.edu} \\[0.5ex] Stanford University}

\date{}

%
%


\maketitle

\begin{abstract}
We introduce a new sub-linear space sketch---the Weight-Median
Sketch---for learning compressed linear classifiers over data streams while supporting
the efficient recovery of large-magnitude weights in the model.
This enables memory-limited execution of several statistical analyses over streams, including
online feature selection, streaming data explanation, relative deltoid
detection, and streaming estimation of pointwise mutual
information. Unlike related sketches that capture the most
frequently-occurring features (or items) in a data stream, the
Weight-Median Sketch captures the features that are most
discriminative of one stream (or class) compared to another. The
Weight-Median Sketch adopts the core data structure used in the
Count-Sketch, but, instead of sketching counts, it captures sketched
gradient updates to the model parameters. We provide a theoretical
analysis that establishes recovery guarantees for
batch and online learning, and demonstrate empirical improvements in
memory-accuracy trade-offs over alternative memory-budgeted methods,
including count-based sketches and feature hashing.

\end{abstract}

\section{Introduction}

With the rapid growth of streaming data volumes, memory-efficient
sketches are an increasingly important tool in analytics tasks
such as finding frequent
items~\citep{charikar2002finding,cormode2005improved,metwally2005efficient,larsen2016heavy},
estimating quantiles~\citep{greenwald2001space,luo2016quantiles}, and approximating the
number of distinct items~\citep{flajolet1985approximate}. Sketching algorithms trade off between
space utilization and approximation accuracy, and are therefore well suited to settings where
memory is scarce or where highly-accurate estimation is not essential.
For example, sketches are used in measuring traffic statistics on resource-constrained network switch hardware \citep{yu2013software}
and in processing approximate aggregate queries in sensor networks \citep{considine2004approximate}.
Moreover, even in commodity server environments where memory is more plentiful, sketches are useful as a lightweight means to perform approximate analyses like identifying frequent search queries or URLs within a broader stream processing pipeline \citep{boykin2014summingbird}.

Machine learning is applicable in many of the same resource-constrained deployment scenarios as existing sketching algorithms.
With the widespread adoption of mobile devices, wearable electronics, and smart home appliances,
there is increasing interest in memory-constrained learning, where statistical models on these devices are updated on-the-fly in response to locally-observed data \citep{longstaff2010improving,kapoor2012memory,mcgraw2016personalized,smith2017federated}. These online updates allow ML-enabled systems to adapt to individual users or local environments.
For example, language models on mobile devices can be personalized in order to improve the accuracy of
speech recognition systems \citep{mcgraw2016personalized},
mobile facial recognition systems can be updated based on user supervision \citep{kapoor2012memory},
packet filters on network routers can be incrementally improved \citep{vamanan2010efficuts,dainotti2012issues},
and human activity classifiers can be tailored to individual motion patterns for more accurate classification \citep{longstaff2010improving,yu2016hybridizing}.

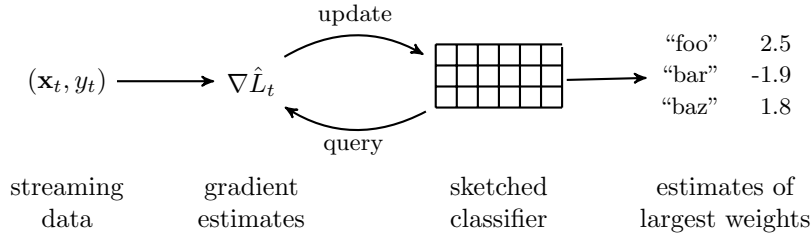
\begin{figure}
  \centering
  \begin{tikzpicture}[shorten >=1pt,>=stealth',thick,scale=1.4]

\node (xy) at (0, 1.05) {$(\mathbf{x}_t, y_t)$};
\node (nabla) at (1.75, 1.05) {$\nabla \hat{L}_t$};
\node (gridtop) at (3.5, 1.25) {};
\node (gridbot) at (3.5, 0.8) {};
\node (gridright) at (4.65, 1.06) {};
\node (feats) at (6.25, 1.1) {\begin{tabular}{@{}rr@{}} \small{``foo''} & \small{2.5}  \\ \small{``bar''} & \small{-1.9} \\ \small{``baz''} & \small{1.8} \end{tabular}};

\draw [step=0.2, shift={(3.5, 0.8)}] (0, 0) grid (1.2, 0.6);

\node (data) at (0, -0.12) {\begin{tabular}{c} streaming \\ data \end{tabular}};
\node (grad) at (1.75, -0.12) {\begin{tabular}{c}  gradient \\ estimates \end{tabular}};
\node (z) at (4.1, -0.12) {\begin{tabular}{c} sketched \\ classifier \end{tabular}};
\node (approx) at (6.25, -0.12) {\begin{tabular}{c}estimates of \\ largest weights \end{tabular}};

\draw [->] (xy) -- (nabla);
\draw [->] (nabla) to [bend left=30] node[above]{\small update} (gridtop);
\draw [->] (gridbot) to [bend left=30] node[below]{\small query} (nabla);
\draw [->] (gridright) to (feats);


\end{tikzpicture}
  \caption{Overview of our approach, where online updates are applied to a sketched (i.e., compressed) classifier from which estimates of the largest weights can be retrieved.}
  \label{fig:overview}
\end{figure}

Online learning in memory-constrained environments is particularly challenging in high-dimensional feature spaces.
For example, consider a spam classifier on text data that is continually updated as new messages are observed and labeled as \emph{spam} or \emph{not spam}. The memory cost of retaining n-gram features grows rapidly as new token combinations are observed in the stream. In an experiment involving an $\sim$80M token newswire dataset \citep{chelba2013one}, we recorded $\sim$47M unique word pairs that co-occur within 5-word spans of text. Disregarding the space required to store strings, maintaining integer vocabulary indexes and 32-bit floating point weights for each of these features would require approximately 560MB of memory. Thus, the memory footprint of classifiers over high-dimensional streaming data can quickly exceed the memory constraints of many deployment environments. Moreover, it is not sufficient to simply apply existing sketches for identifying frequently-occurring features, since the features that occur most often are not necessarily the most discriminative.

In this work, we develop a new sketching algorithm that targets ML applications in these memory-constrained settings.
Building on prior work on sketching for identifying frequent items, we introduce the \emph{Weight-Median Sketch} (WM-Sketch) 
for learning compressed linear classifiers over data streams.
Figure~\ref{fig:overview} illustrates the high-level approach: we first allocate a fixed region of memory as the sketch data structure, and as new examples are observed in the stream, the weights stored in this structure are updated via gradient descent on a given loss function.
In contrast to previous work that employs the ``hashing trick'' to reduce the memory footprint
of a classifier \citep{shi2009hash,weinberger2009feature}, the WM-Sketch supports the approximate recovery of the
most heavily-weighted features in the classifier: at any time, we can efficiently return a list of the top-$K$ features
along with estimates of their weights in an uncompressed classifier trained over the same sequence of examples.

The ability to retrieve heavily-weighted features from the WM-Sketch confers several benefits.
First, the sketch provides a classifier with low memory footprint that retains a
degree of \emph{model interpretability}. This is often practically important as understanding which features are most influential in making predictions
 is relevant to feature selection~\citep{zhang2016materialization},
model debugging, issues of fairness in ML systems~\citep{corbett2017algorithmic}, and human perceptions of model trustworthiness~\citep{ribeiro2016should}.
Second, the ability to retrieve heavily-weighted features enables the execution of a range of analytics workloads that can
be formulated as classification problems over streaming data. In this paper, we demonstrate the effectiveness
of the WM-Sketch in three such applications: (i) streaming data explanation \citep{bailis2017macrobase,meliou-tutorial},
(ii) detecting large relative differences between data streams (i.e., detecting relative deltoids)~\citep{cormode2005s}
and (iii) streaming identification of highly-correlated pairs of features via pointwise mutual information \citep{durme2009streaming}.
The WM-Sketch is able to perform these analyses while using far less memory than uncompressed classifiers.

The key intuition behind the WM-Sketch is that by randomly projecting the gradient updates to a linear
classifier, we can incrementally maintain a compressed version of the
true, high-dimensional model. By choosing this random projection appropriately, we can support
efficient approximate recovery of the model weights.
In particular, the WM-Sketch maintains a Count-Sketch projection~\citep{charikar2002finding}
of the weight vector of the linear classifier.
However, unlike Heavy Hitters sketches that simply increment or decrement counters, the
WM-Sketch updates its state using online gradient descent~\citep{hazan2016introduction}.
Since these updates themselves depend on the current weight estimates, a careful
analysis is needed to ensure that the estimated weights do not diverge from the
true (uncompressed) model parameters over the course of multiple online updates.

We analyze the WM-Sketch both theoretically and empirically. Theoretically, we provide guarantees on the
approximation error of these weight estimates, showing that it is possible to accurately recover large-magnitude
weights using space sub-linear in the feature dimension.
We describe an optimized variant, the Active-Set Weight-Median Sketch (AWM-Sketch) that
outperforms alternative memory-constrained algorithms in experiments on benchmark datasets. For example,
on the standard Reuters RCV1 binary classification benchmark, the AWM-Sketch 
recovers the most heavily-weighted features in the model with 4$\times$ better
approximation error than a frequent-features baseline using the Space Saving algorithm \citep{metwally2005efficient}
and 10$\times$ better than a na\"{i}ve weight truncation baseline, while using the same
amount of memory. Moreover, we demonstrate that the additional interpretability of the AWM-Sketch
does not come at the cost of reduced classification accuracy: empirically,
the AWM-Sketch in fact improves on the classification accuracy of feature hashing, which does not support
weight recovery.

To summarize, we make the following contributions in this work:
\begin{itemize}
  \item We introduce the Weight-Median Sketch, a new sketch for learning linear classifiers over data streams that supports approximate retrieval of the most heavily-weighted features.
  \item We provide a theoretical analysis that provides guarantees on the accuracy of the WM-Sketch estimates. In particular, we show that for feature dimension $d$ and with success probability $1 - \delta$, we can learn a compressed model of dimension $O\left(\epsilon^{-4}\log^3(d/\delta)\right)$ that supports approximate recovery of the optimal weight vector $\mathbf{w}_*$, where the absolute error of each weight is bounded above by $\epsilon \|\mathbf{w}_*\|_1$.
  \item We empirically demonstrate that the optimized AWM-Sketch outperforms several alternative methods in terms of memory-accuracy trade-offs across a range of real-world datasets.
\end{itemize}

Our implementations of the WM-Sketch, AWM-Sketch and the baselines evaluated in our experiments are available at \url{https://github.com/stanford-futuredata/wmsketch}.
\section{Related Work}
\label{sec:related-work}

\vspace{-2pt}
\minihead{Heavy Hitters in Data Streams} Given a sequence of items, the heavy hitters problem is to return the set of all
items whose frequency exceeds a specified fraction of the total number of items.
Algorithms for finding frequent items include counter-based approaches \citep{manku2002approximate,demaine2002frequency,karp2003simple,metwally2005efficient}, quantile algorithms 
\citep{greenwald2001space,shrivastava2004medians}, and sketch-based methods \citep{charikar2002finding,cormode2005improved}. Mirylenka et al. \citep{mirylenka2015conditional} develop streaming algorithms for finding \emph{conditional} heavy hitters, i.e. items that are frequent
in the context of a separate ``parent'' item. Our proposed sketch builds on the Count-Sketch \citep{charikar2002finding}, which was originally 
introduced for identifying frequent items.
In Sec.~\ref{sec:problem-statement}, we show how frequency estimation can in fact be related to the problem of estimating classifier weights.

\vspace{-1pt}
\minihead{Characterizing Changes in Data Streams} \citet{cormode2005s} propose a Count-Min-based algorithm for identifying items whose frequencies change significantly, while \citet{schweller2004reversible} propose the use of reversible hashes to avoid storing key information. In order to explain anomalous traffic flows, \citet{brauckhoff2012anomaly} use histogram-based detectors and association rules to detect large absolute differences. In our network monitoring application (Sec.~\ref{sec:applications}), we focus instead on detecting large \emph{relative} differences, a problem which has previously been found to be challenging \citep{cormode2005s}.

\vspace{-1pt}
\minihead{Resource-Constrained and On-Device Learning} In contrast to \emph{federated learning}, where the goal is to learn a global model on distributed data \citep{konevcny2015federated} or to enforce global regularization on a collection of local models \citep{smith2017federated}, our focus is on memory-constrained learning on a single device without communication over a network. \citet{gupta2017protonn} and \citet{kumar2017resource} perform inference with small-space classifiers on IoT devices, whereas we focus on online learning. Unlike budget kernel methods that aim the reduce the number of stored examplars \citep{crammer2004online,dekel2006forgetron}, our methods instead reduce the dimensionality of feature vectors. Our work also differs from \emph{model compression} or \emph{distillation} \citep{bucilu?2006model,ba2014deep,hinton2015distilling}, which aims to imitate a large, expensive model using a smaller one with lower memory and computation costs---in our setting, the full uncompressed model is never instantiated and the compressed model is learned directly.

\vspace{-1pt}
\minihead{Sparsity-Inducing Regularization} $\ell_1$-regularization is a standard technique for encouraging parameter sparsity in online learning \citep{langford2009sparse,duchi2009efficient,xiao2010dual}. In practice, it is difficult to \emph{a priori} select an $\ell_1$-regularization strength in order to satisfy a given sparsity budget. Here, we propose a different approach: we first fix a memory budget and then use the allocated space to approximate a classifier, with the property that our approximation will be better for sparse parameter vectors with small $\ell_1$-norm.

\vspace{-1pt}
\minihead{Learning Compressed Classifiers} Feature hashing \citep{shi2009hash,weinberger2009feature} is a technique where the classifier is trained on features that have been hashed to a fixed-size table. This approach lowers memory usage by reducing the dimension of the feature space, but at the cost of model interpretability. Our sketch is closely related to this approach---we show that an appropriate choice of random projection enables the recovery of model weights. \citet{calderbankcompressed} describe \emph{compressed learning}, where a classifier is trained on compressively-measured data. The authors focus on classification performance in the compressed domain and do not consider the problem of recovering weights in the original space.

\section{Background}
\label{sec:preliminaries}

In Section~\ref{sec:background}, we review the relevant material on random projections for dimensionality reduction.
In Section~\ref{sec:online-learning}, we describe \emph{online learning}, which models learning on streaming data.

\minihead{Conventions and Notation} The notation $w_i$ denotes the $i$th element of the vector $\mathbf{w}$. The notation $[n]$ denotes the set $\{1, \dots, n \}$. We write $p$-norms as $\|\mathbf{w}\|_p$, where the $p$-norm of $\mathbf{w}\in\mathbb{R}^d$ is defined as $\|\mathbf{w}\|_p \coloneqq \left(\sum_{i=1}^d |w_i|^p\right)^{1/p}$. The infinity-norm $\|\mathbf{w}\|_\infty$ is defined as $\|\mathbf{w}\|_\infty \coloneqq \max_i |w_i|$.

\subsection{Dimensionality Reduction via Random~Projection}
\label{sec:background}

\minihead{Count-Sketch} The Count-Sketch \citep{charikar2002finding} is a linear projection of a vector $\mathbf{x}\in \mathbb{R}^d$ that supports efficient approximate recovery of the entries of $\mathbf{x}$.
The sketch of $\mathbf{x}$ can be built incrementally as entries are observed in a stream---for example, $\mathbf{x}$ can be a vector of counts that is updated as new items are observed.

For a given size $k$ and depth $s$, the Count-Sketch algorithm maintains a collection of $s$ hash tables, each with width $k/s$ (Figure~\ref{fig:cs}). Each index $i \in [d]$ is assigned a random bucket $h_j(i)$ in table $j$ along with a random sign $\sigma_j(i)$. Increments to the $i$th entry are multiplied by $\sigma_j(i)$ and then added to the corresponding buckets $h_j(i)$. The estimator for the $i$th coordinate is the median of the values in the assigned buckets multiplied by the corresponding sign flips. \citet{charikar2002finding} showed the following recovery guarantee for this procedure:

\vspace{-0.5ex}
\begin{lemma}\label{lem:count-sketch}
	\citep{charikar2002finding} Let $\mathbf{x}_\mathrm{cs}$ be the Count-Sketch estimate of the vector $\mathbf{x}$. For any vector $\mathbf{x}$, with probability $1-\delta$, a Count-Sketch matrix with width $\Theta(1/\epsilon^2)$ and depth $\Theta(\log(d/\delta))$ satisfies
	\vspace{-0.25ex}
	\[
	\lpnorm{\mathbf{x}-\mathbf{x}_\mathrm{cs}}{\infty} \le \epsilon \norm{\mathbf{x}}.
	\]
\end{lemma}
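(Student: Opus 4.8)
The plan is to follow the standard median-of-estimates argument for the Count-Sketch, keeping the two sources of randomness---the bucket assignments $h_j$ and the sign functions $\sigma_j$---separate. Fix a coordinate $i \in [d]$ and a single table $j \in [s]$, which has width $w = \Theta(1/\epsilon^2)$. The value read out of table $j$ for coordinate $i$ is $\hat{x}_i^{(j)} = \sigma_j(i)\, C_j[h_j(i)] = x_i + Z_{ij}$, where $C_j[\cdot]$ denotes the buckets of table $j$ and $Z_{ij} = \sigma_j(i) \sum_{i' \neq i,\, h_j(i') = h_j(i)} \sigma_j(i')\, x_{i'}$ is the error contributed by coordinates that collide with $i$.

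First I would bound $Z_{ij}$ in mean and variance by conditioning on the hash function $h_j$ and then using the randomness of the signs. Given $h_j$, the $\sigma_j(i')$ are independent, mean-zero, $\pm 1$ random variables, so $\E[Z_{ij} \mid h_j] = 0$ and $\E[Z_{ij}^2 \mid h_j] = \sum_{i' \neq i,\, h_j(i') = h_j(i)} x_{i'}^2$. Taking expectations over $h_j$ and using that each $i' \neq i$ collides with $i$ with probability $1/w$ gives $\E[Z_{ij}^2] = \tfrac{1}{w}\sum_{i' \neq i} x_{i'}^2 \le \tfrac{1}{w}\norm{x}^2$. Choosing the width constant so that $w \ge 3/\epsilon^2$, Chebyshev's inequality yields $\Pr\!\left[\, |Z_{ij}| > \epsilon\norm{x} \,\right] \le \frac{\E[Z_{ij}^2]}{\epsilon^2\norm{x}^2} \le \frac{1}{w\epsilon^2} \le \frac{1}{3}$, so each individual table gives an estimate within $\epsilon\norm{x}$ of $x_i$ with probability at least $2/3$.

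Next I would amplify this constant-probability guarantee by taking the median over the $s = \Theta(\log(d/\delta))$ independent tables. The median estimate $\hat{x}_i = \mathrm{median}_{j}\, \hat{x}_i^{(j)}$ can lie outside $[x_i - \epsilon\norm{x}, x_i + \epsilon\norm{x}]$ only if at least half of the $s$ per-table estimates do; since the tables are independent and each is bad with probability at most $1/3 < 1/2$, a Chernoff bound shows this event has probability $e^{-\Omega(s)}$, which is at most $\delta/d$ once the depth constant is chosen appropriately. A union bound over the $d$ coordinates then shows that $|\hat{x}_i - x_i| \le \epsilon\norm{x}$ holds simultaneously for all $i$ with probability at least $1 - \delta$, i.e.\ $\lpnorm{\mathbf{x} - \mathbf{x}_{\mathrm{cs}}}{\infty} \le \epsilon\norm{\mathbf{x}}$, which is exactly the claim of Lemma~\ref{lem:count-sketch}.

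The argument is mostly routine bookkeeping; the one place that needs care is the order of the probabilistic analysis---the cancellation giving $\E[Z_{ij} \mid h_j] = 0$ and the clean variance bound both require first fixing the hash functions and only then exploiting the independence of the signs---together with choosing the width and depth constants so that the per-table failure probability sits strictly below $1/2$, leaving enough slack for the Chernoff step on the median to push the overall failure probability down to $\delta/d$ before the union bound.
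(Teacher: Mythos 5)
The paper does not prove this lemma---it is quoted directly from \citet{charikar2002finding} as background---so there is no internal proof to compare against. Your argument is the standard and correct proof of that cited result: the unbiasedness and variance bound for a single table (conditioning on $h_j$ first, then using the sign randomness), Chebyshev to get a per-table failure probability below $1/2$, a Chernoff bound on the median across the $\Theta(\log(d/\delta))$ tables, and a union bound over the $d$ coordinates; the only detail worth flagging is that the single-table step needs only pairwise independence of the hashes and signs, which the construction provides.
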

In other words, point estimates of each entry of the vector $\mathbf{x}$ can be computed from its compressed form $\mathbf{x}_\mathrm{cs}$. This enables accurate recovery of high-magnitude entries that comprise a large fraction of the norm $\|\mathbf{x}\|_2$.

	\begin{figure}
		\centering
		\centering
		\includegraphics[height=1.3in]{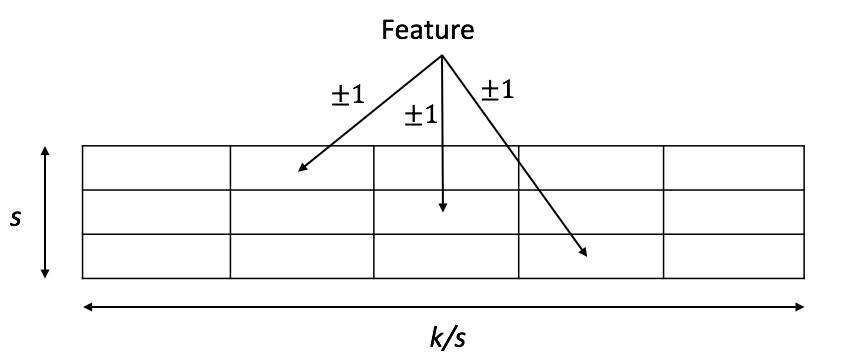}
		\caption{An illustration of the Count-Sketch of size $k$ with depth $s$ and width $k/s$. Each feature hashes to $s$ locations, multiplied by a random $\pm1$ sign.}
		\label{fig:cs}
	\end{figure}

\minihead{Johnson-Lindenstrauss (JL) property} A random projection matrix is said to have the Johnson-Lindenstrauss (JL) property \citep{johnson1984extensions} if it preserves the norm of a vector with high probability:
\begin{definition}\label{def:JL}
	A random matrix $R\in \mathbb{R}^{k \times d}$ has the JL property with error $\epsilon$ and failure probability $\delta$ if for any given $\mathbf{x}\in \mathbb{R}^d$, we have with probability $1-\delta$:
	\[
	\Big| \|R\mathbf{x}\|_2 - \|\mathbf{x}\|_2 \Big| \le \epsilon\| \mathbf{x}\|_2.
	\]
\end{definition}
The JL property holds for dense matrices with independent Gaussian or Bernoulli entries \citep{achlioptas2003database}, and recent work has shown that it applies to certain sparse matrices as well \citep{kane2014sparser}. Intuitively, JL matrices preserve the geometry of a set of points, and we leverage this key fact to ensure that we can still recover the original solution after projecting to low dimension.

\subsection{Online Learning}
\label{sec:online-learning}

The online learning framework deals with learning on a stream of examples, where the model is updated over a series of rounds $t=1,2,\dots,T$.
In each round, we update the model weights $\mathbf{w}_t$ via the following process: (1) receive an input example $(\mathbf{x}_t, y_t)$, (2) incur loss $L_t(\mathbf{w}_t) \coloneqq\ell(\mathbf{w}_t, \mathbf{x}_t, y_t)$, where $\ell$ is a given loss function, and (3) update weights $\mathbf{w}_t$ to $\mathbf{w}_{t+1}$.
There are numerous algorithms for updating the model weights (e.g., \citep{hazan2007logarithmic,duchi2009efficient,xiao2010dual}). In our algorithm, we use online gradient descent (OGD) [\citealp{hazan2016introduction}; Chp. 3], which uses the following update rule:
\begin{equation}
  \mathbf{w}_{t+1} = \mathbf{w}_t - \eta_t \nabla L_t(\mathbf{w}_t)\nonumber,
\end{equation}
where $\eta_t > 0$ is the learning rate at step $t$. OGD enjoys the advantages of simplicity and minimal space requirements: we only need to maintain a representation of the weight vector $\mathbf{w}_t$ and a global scalar learning rate.

\section{Problem Statement}
\label{sec:problem-statement}

We focus on online learning for binary classification with linear models. We observe a stream of examples $(\mathbf{x}_t, y_t)$, where each $\mathbf{x}_t \in \mathbb{R}^d$ is a feature vector and each $y_t \in\{-1, +1 \}$ is a binary label. A linear classifier parameterized by weights $\mathbf{w}\in\mathbb{R}^d$ makes predictions $\hat{y}$ by returning $+1$ for all inputs with non-negative inner product with $\mathbf{w}$, and $-1$ otherwise: $\hat{y} = \mathrm{sign}\left(\mathbf{w}^T \mathbf{x}\right)$. The goal of learning is to select weights $\mathbf{w}$ that minimize the total loss $\sum_t L_t(\mathbf{w})$ on the observed data. In the following, we refer to $\mathbf{w}$ interchangeably as the \emph{weights} and as the \emph{classifier}.

Suppose we have observed $T$ examples in the stream, and consider the classifier $\mathbf{w}_*$ that minimizes the loss over those $T$ examples. It may not be possible to precisely represent each entry\footnote{For example, storing each nonzero entry as a single-precision floating point number.} of the vector $\mathbf{w}_*$ within a memory budget that is much less than the cost of representing a general vector in $\mathbb{R}^d$. In particular, $\mathbf{w}_*$ may be a dense vector. Thus, it may not be possible to represent $\mathbf{w}_*$ in a memory-constrained setting, and in practical applications this is particularly problematic when the dimension $d$ is large.

For a fixed memory budget $B$, our goal is to obtain a summary $\mathbf{z}$ that uses space at most $B$ from which we are able to estimate the value of each entry of the optimal classifier $\mathbf{w}_*$. We formalize this problem as the Weight Estimation Problem, which we make precise in the following section. In addition to supporting weight estimation, a secondary goal is to be able to use the summary $\mathbf{z}$ to perform classification on data points $\mathbf{x}$ via some inference function $f$, i.e. $\hat{y} = f(\mathbf{z}, \mathbf{x})$. We would like to classify data points using the summary without too much additional error compared to $\mathbf{w}_*$.

\subsection{The Weight Estimation Problem}
\label{sec:problem}

In this section, we formalize the problem of estimating the weights of the optimal classifier $\mathbf{w}_*$ from a compact summary.
To facilitate the presentation of this problem and to build intuition, we highlight the connection between our goal of weight estimation and
previous work on the approximate recovery of frequency estimates from compressed count vectors. To this end, we formalize a
general problem setup that subsumes both the approximate recovery of frequencies and the approximate recovery of weights in linear classifiers
as special cases.

The $\epsilon$-approximate frequency estimation problem can be defined as follows:
\begin{definition}{\citep{cormode2008finding} ($\epsilon$-Approximate Frequency Estimation)}\label{def:freq-estimation-problem}
Given a sequence of $T$ items, each drawn from the set $[d]$, let $v_i$ denote the count of the number of times item $i$ is seen over the stream. The $\epsilon$-approximate frequency estimation problem is to return, for any $i \in [d]$, a value $\hat{v}_i$ such that $|\hat{v}_i - v_i| \leq \epsilon T$.
\end{definition}
The frequency estimation problem commonly appears in the context of finding heavy hitters---i.e., items whose frequencies exceed a given threshold $\phi T$. Given an algorithm that solves the $\epsilon$-approximate frequency estimation problem, we can then find all heavy hitters (possibly with false positives) by returning all items with estimated frequency above $(\phi - \epsilon)T$.

We now define an analogous setup for online convex optimization problems that formalizes our goal of weight recovery from summarized classifiers:
\begin{definition}{(($\epsilon, p$)-Approximate Weight Estimation for Convex Functions)}
Given a sequence of $T$ convex functions $L_t : \mathcal{X} \rightarrow \mathbb{R}$ over a convex domain $\mathcal{X} \subseteq \mathbb{R}^d$, let $\mathbf{w}_* \coloneqq \argmin_\mathbf{w} \sum_{t=1}^T L_t(\mathbf{w})$. The $(\epsilon, p)$-approximate weight estimation problem is to return, for any $i\in[d]$, a value $\hat{w}_i$ such that $|\hat{w}_i - (w_*)_i| \leq \epsilon\|\mathbf{w}_*\|_p.$
\end{definition}

Note that frequency estimation~(Definition~\ref{def:freq-estimation-problem}) can be viewed as a special case of this problem.
Set $L_t(\mathbf{w}) = -\mathbf{w}^T \mathbf{x}_t$,
where $(x_t)_i = 1$ if item $i$ is observed at time $t$ and $0$ otherwise (assume that only one item is observed at each $t$), define $\mathbf{x}_{1:T} \coloneqq \sum_{t=1}^T \mathbf{x}_t$, and let $\mathcal{X} = \{ \mathbf{w} : \|\mathbf{w}\|_2 \leq \|\mathbf{x}_{1:T}\|_2 \}$. 
Then $\mathbf{w}_* = \mathbf{x}_{1:T}$, and we note that $\|\mathbf{w}_*\|_1 = T$. Thus, the frequency estimation problem is an instance of the $(\epsilon, 1)$-approximate weight estimation problem.

\minihead{Weight Estimation for Linear Classifiers} We now specialize to the case of online learning for linear classifiers. Define the losses $L_t$ as:
\begin{equation}
\label{eq:objective}
  L_t(\mathbf{w}) = \ell\left(y_t\mathbf{w}^T \mathbf{x}_t\right) + \frac{\lambda}{2} \|\mathbf{w}\|_2^2,
\end{equation}
where $\ell$ is a convex, differentiable function, $(\mathbf{x}_t, y_t)$ is the example observed at time $t$, and $\lambda > 0$ controls the strength of $\ell_2$-regularization.
The choice of $\ell$ defines the linear classification model to be used.
For example, the logistic loss $\ell(\tau) = \log(1 + \exp(-\tau))$ defines logistic regression, and smoothed versions of the hinge loss $\ell(\tau) = \max\{0, 1 - \tau\}$ define close relatives of linear support vector machines.

To summarize, for each time step, we wish to maintain a \emph{compact summary} $\mathbf{z}_t$ that allows us to estimate each weight in the optimal classifier $\mathbf{w}_*$ over all the examples seen so far in the stream. In the following sections, we describe a method for maintaining such a summary and provide theoretical guarantees on the accuracy of the recovered weights.

\section{Finding Heavily-Weighted Features}
\label{sec:method}

In this section, we describe our proposed method, the Weight-Median Sketch (WM-Sketch), along with a simple variant, the Active-Set Weight-Median Sketch (AWM-Sketch), that empirically improves on the basic WM-Sketch in both classification and recovery accuracy.

\vspace{-1ex}
\subsection{Weight-Median Sketch}
\label{sec:wm-sketch}

The main data structure in the WM-Sketch is identical to that used in the Count-Sketch. The sketch is parameterized by size $k$, depth $s$, and width $k/s$. We initialize the sketch with a size-$k$ array set to zero.
For a given depth $s$, we view this array as being arranged in $s$ rows, each of width $k/s$ (assume that $k$ is a multiple of $s$). 
We denote this array as $\mathbf{z}$, and equivalently view it as a vector in $\mathbb{R}^k$.

The high-level idea is that each row of the sketch is a compressed version of the model weight vector $\mathbf{w}\in\mathbb{R}^d$, where each index $i \in [d]$ is mapped to some assigned bucket $j \in [k/s]$. Since $k/s \ll d$, there will be many collisions between these weights; therefore, we maintain $s$ rows---each with different assignments of features to buckets---in order to disambiguate weights.

\minihead{Hashing Features to Buckets} In order to avoid explicitly storing the mapping from features to buckets, which would require space linear in $d$, we implement the mapping using hash functions as in the Count-Sketch. For each row $j \in [s]$, we maintain a pair of hash functions, $h_j : [d] \rightarrow [k/s]$ and $\sigma_j : [d] \rightarrow \{-1, +1\}$. Let the matrix $A \in \{-1, +1\}^{k\times d}$ denote the Count-Sketch projection implicitly represented by the hash functions $h_j$ and $\sigma_j$, and let $R$ be a scaled version of this projection, $R = \frac{1}{\sqrt{s}}A$.
We use the projection $R$ to compress feature vectors and update the sketch.

\minihead{Updates} We update the sketch by performing gradient descent updates directly on the compressed classifier $\mathbf{z}$.
We compute gradients on a ``compressed'' version $\hat{L}_t$ of the regularized loss $L_t$ defined in Eq.~\ref{eq:objective}:
\begin{equation}\nonumber
  \hat{L}_t(\mathbf{z}) = \ell\left(y_t \mathbf{z}^T R\mathbf{x}_t \right) + \frac{\lambda}{2} \|\mathbf{z}\|_2^2.
\end{equation}
This yields the following update to $\mathbf{z}$:
\begin{equation}\nonumber
  \hat{\Delta}_t \coloneqq -\eta_t \nabla \hat{L}_t(\mathbf{z}) = -\eta_t \left(y_t \nabla\ell(y_t \mathbf{z}^T R \mathbf{x}_t) R\mathbf{x}_t + \lambda \mathbf{z} \right).
\end{equation}

To build intuition, it is helpful to compare this update to the Count-Sketch update rule \citep{charikar2002finding}. In the frequent items setting, the input $\mathbf{x}_t$ is a one-hot encoding for the item seen in that time step. The update to the Count-Sketch state $\mathbf{z}_\mathrm{cs}$ is the following:
\begin{equation}
\label{eq:cs-update}
  \tilde{\mathbf{\Delta}}^\mathrm{cs}_t = {A}\mathbf{x}_t,\nonumber
\end{equation}
where $A$ is defined identically as above. Ignoring the regularization term, our update rule is simply the Count-Sketch update scaled by the constant $-\eta_t y_t s^{-1/2} \nabla\ell(y_t\mathbf{z}^T R\mathbf{x}_t)$. However, an important detail to note is that the Count-Sketch update is \emph{independent} of the sketch state $\mathbf{z}_\mathrm{cs}$, whereas the WM-Sketch update does depend on $\mathbf{z}$. This cyclical dependency between the state and state updates is the main challenge in our analysis of the WM-Sketch.

\begin{algorithm}[t]
  \DontPrintSemicolon
  \SetAlgoLined
  \SetAlgoNoEnd
  \SetKwFunction{FDot}{Dot}
  \SetKwFunction{FUpdate}{Update}
  
  \newcommand\mycommfont[1]{\rmfamily{#1}}
  \SetCommentSty{mycommfont}
  \SetKwComment{Comment}{$\triangleright$ }{}
  
  \Input{size~$k$, depth~$s$, loss function~$\ell$, $\ell_2$-regularization parameter~$\lambda$, learning rate schedule~$\eta_t$}
  
  \Init{}{
	  $\mathbf{z} \leftarrow s\times k / s$ array of zeroes \;
	  Sample $R$, a Count-Sketch matrix scaled by $\frac{1}{\sqrt{s}}$\;
	  $t \leftarrow 0$
  }
  
  \Fn{\FUpdate{$\mathbf{x}$, $y$}}{
    $\tau \leftarrow \mathbf{z}^T R \mathbf{x}\quad$ \Comment*[f]{Prediction for $\mathbf{x}$}
    
    $\mathbf{z} \leftarrow (1 - \lambda\eta_t)\mathbf{z} - \eta_t y \nabla\ell\left(y\tau\right) R\mathbf{x} $\;
    $t \leftarrow t + 1$\;
  }
  
  \SetKwFunction{FQuery}{Query}
  \Fn{\FQuery{$i$}}{
    \textbf{return} output of Count-Sketch retrieval on $\sqrt{s}\mathbf{z}$
  }
  
  \caption{Weight-Median (WM) Sketch}
  \label{alg:hwsketch}
\end{algorithm}

\minihead{Queries} To obtain an estimate $\hat{w}_i$ of the $i$th weight, we return the median of the values $\{\sqrt{s} \sigma_j(i) z_{j,h_j(i)} :\; j\in[s]\}$. Save for the $\sqrt{s}$ factor, this is identical to the query procedure for the Count-Sketch.

\smallskip

We summarize the update and query procedures for the WM-Sketch in Algorithm~\ref{alg:hwsketch}. In the next section, we show how the sketch size $k$ and depth $s$ parameters can be chosen to satisfy an $\epsilon$ approximation guarantee with failure probability $\delta$ over the randomness in the sketch matrix.

\minihead{Efficient Regularization} A na\"{i}ve implementation of $\ell_2$ regularization on $\mathbf{z}$ that scales each entry in $\mathbf{z}$ by $(1 - \eta_t \lambda)$ in each iteration incurs an update cost of $O(k + s \cdot \mathrm{nnz}(\mathbf{x}))$. This masks the computational gains that can be realized when $\mathbf{x}$ is sparse. Here, we use a standard trick \citep{shalev2011pegasos}: we maintain a global \emph{scale} parameter $\alpha$ that scales the sketch values $\mathbf{z}$. Initially, $\alpha = 1$ and we update $\alpha \leftarrow (1 - \eta_t \lambda) \alpha$ to implement weight decay over the entire feature vector. Our weight estimates are therefore additionally scaled by $\alpha$: $\hat{w}_i = \mathrm{median}\left\{ \sqrt{s} \alpha \sigma_j(i) z_{j, h_j(i)} :\; j \in [s] \right\}$. This optimization reduces the update cost from $O(k + s \cdot \mathrm{nnz}(\mathbf{x}))$ to $O(s \cdot \mathrm{nnz}(\mathbf{x}))$.

\subsection{Active-Set Weight-Median Sketch}

We now describe a simple, heuristic extension to the WM-Sketch that significantly improves the recovery accuracy
of the sketch in practice. We refer to this variant as the Active-Set Weight-Median Sketch (AWM-Sketch).

To efficiently track the top elements across
sketch updates, we can use a min-heap ordered by the absolute value of the estimated weights.
This technique is also used alongside heavy-hitters sketches to identify the most frequent items in the stream \citep{charikar2002finding}.
In the basic WM-Sketch, the heap merely functions as a mechanism to passively maintain
the heaviest weights. This baseline scheme can be improved by noting that weights that are already stored
in the heap need not be tracked in the sketch; instead, the sketch can be updated lazily only when
the weight is evicted from the heap. 
This heuristic has previously been used in the context of improving count estimates derived from
a Count-Min Sketch \citep{roy2016augmented}.
The intuition here is the following: since we are already maintaining a heap of heavy
items, we can utilize this structure to reduce
error in the sketch as a result of collisions with heavy items.

The heap can be thought of as an ``active set'' of high-magnitude weights, while the sketch estimates the contribution of the tail of the weight vector. Since the weights in the heap are represented exactly, this active set heuristic should intuitively yield better estimates of the heavily-weighted features in the model.

\begin{algorithm}[t]
  \DontPrintSemicolon
  \SetAlgoLined
  \SetAlgoNoEnd
  
  \newcommand\mycommfont[1]{\rmfamily{#1}}
  \SetCommentSty{mycommfont}
  \SetKwComment{Comment}{$\triangleright$ }{}
  
  \Init{}{
    $S \leftarrow \{\} \quad$ \Comment*[f]{Empty heap}
    
    $\mathbf{z} \leftarrow s\times k / s$ array of zeroes \;
	Sample $R$, a Count-Sketch matrix scaled by $\frac{1}{\sqrt{s}}$\;
	$t \leftarrow 0$
  }
  
  \Fn{\FUpdate{$\mathbf{x}$, $y$}}{
    $\mathbf{x}_s \leftarrow \{ x_i : i \in S \}\quad$\Comment*[f]{Features in heap}
    
    $\mathbf{x}_\mathrm{wm} \leftarrow \{ x_i : i \not\in S \}\quad$\Comment*[f]{Features in sketch}
    
    $\tau \leftarrow \sum_{i\in S} S[i]\cdot x_i + \mathbf{z}^T R \mathbf{x}_\mathrm{wm}$ \Comment*[f]{Prediction for $\mathbf{x}$}
    
	$S \leftarrow (1 - \lambda\eta_t) S - \eta_t y \nabla\ell(y\tau) \mathbf{x}_s$ \Comment*[f]{Heap update}
	
	$\mathbf{z} \leftarrow (1 - \lambda\eta_t) \mathbf{z}\quad$ \Comment*[f]{Apply regularization}

    \For{$i \not\in S$}{
    \Comment{Either update $i$ in sketch or move to heap}
      $\tilde{w} \leftarrow \FQuery{i} - \eta_t yx_i\nabla\ell(y\tau)$\;
      $i_\mathrm{min} \leftarrow \argmin_j(|S[j]|)$\;
      \eIf{$|\tilde{w}| > |S[i_\mathrm{min}]|$}{    
        Remove $i_\mathrm{min}$ from $S$\;
        Add $i$ to $S$ with weight $\tilde{w}$\;
        Update $i_\mathrm{min}$ in sketch with $S[i_\mathrm{min}] - \FQuery($${i_\mathrm{min}})$\;
      }{
        Update $i$ in sketch with $\eta_t yx_i \nabla\ell(y\tau)$\;
      }
    }
    $t \leftarrow t + 1$\;
  }
  \caption{Active-Set Weight-Median (AWM) Sketch}
  \label{alg:hwactive}
\end{algorithm}

As a general note, similar coarse-to-fine approximation schemes have been proposed in other online learning settings. A similar scheme for memory-constrained sparse linear regression was analyzed by \citet{steinhardt2015minimax}. Their algorithm similarly uses a Count-Sketch for approximating weights, but in a different setting ($K$-sparse linear regression) and with a different update policy for the active set.


\vspace{-6pt}
\section{Theoretical Analysis}
\label{sec:analysis}

We derive bounds on the recovery error achieved by the WM-Sketch for given settings of the size $k$ and depth $s$.
The main challenge in our analysis is that the updates to the sketch depend on gradient estimates which in turn depend on the state
of the sketch. This reflexive dependence makes it difficult to straightforwardly transplant the standard analysis for the Count-Sketch to
our setting. Instead, we turn to ideas drawn from norm-preserving random projections and online convex optimization.

In this section, we begin with an analysis of recovery error in the batch setting, where we are given access to a fixed dataset of size $T$ consisting of the first $T$ examples observed in the stream and are allowed multiple passes over the data. Subsequently, we use this result to show guarantees in a restricted online case where we are only allowed a single pass through the data, but with the assumption that the order of the data is not chosen adversarially.

\subsection{Batch Setting}

First, we briefly outline the main ideas in our analysis. With high probability, we can sample a random projection to dimension $k \ll d$ that satisfies the JL norm preservation property (Definition~\ref{def:JL}). We use this property to show that for any fixed dataset of size $T$, optimizing a projected version of the objective yields a solution that is close to the projection of the minimizer of the original, high-dimensional objective. Since our specific construction of the JL projection is also a Count-Sketch projection, we can make use of existing error bounds for Count-Sketch estimates to bound the error of our recovered weight estimates.

Let $R\in\mathbb{R}^{k\times d}$ denote the scaled Count-Sketch matrix defined in Sec.~\ref{sec:wm-sketch}.
This is the hashing-based sparse JL projection proposed by Kane and Nelson \citep{kane2014sparser}. We consider the following pair of objectives defined over the $T$ observed examples---the first defines the problem in the original space and the second defines the corresponding problem where the learner observes sketched examples $(R\mathbf{x}_t, y_t)$:
\begin{align*}
  L(\mathbf{w}) &= \frac{1}{T}\sum_{t=1}^{T}\ell\left(y_t \mathbf{w}^T \mathbf{x}_t\right) + \frac{\lambda}{2}\|\mathbf{w}\|_2^2, \\
  \hat{L}(\mathbf{z}) &= \frac{1}{T}\sum_{t=1}^{T}\ell\left(y_t \mathbf{z}^T R\mathbf{x}_t\right) + \frac{\lambda}{2}\|\mathbf{z}\|_2^2.
\end{align*}

Suppose we optimized these objectives over $\mathbf{w}\in\mathbb{R}^d$ and $\mathbf{z}\in\mathbb{R}^k$ respectively to obtain solutions $\mathbf{w}_* = \argmin_\mathbf{w} L(\mathbf{w})$ and $\mathbf{z}_* = \argmin_{\mathbf{z}}\hat{L}(\mathbf{z})$. How then does $\mathbf{w}_*$ relate to $\mathbf{z}_*$ given our choice of sketching matrix $R$ and regularization parameter $\lambda$? Intuitively, if we stored all the data observed up to time $T$ and optimized $\mathbf{z}$ over this dataset, we should hope that the optimal solution $\mathbf{z}_*$ is close to $R\mathbf{w}_*$, the sketch of $\mathbf{w}_*$, in order to have any chance of recovering the largest weights of $\mathbf{w}_*$. We show that in this batch setting, $\norm{\b{z}_*-R\b{w}_*}$ is indeed small; we then use this property to show element-wise error guarantees for the Count-Sketch recovery process.
We now state our result for the batch setting:

\begin{theorem}
	\label{thm:batch-recovery}
	Let the loss function $\ell$ be $\beta$-strongly smooth\footnote{A function $f:\mathcal{X} \rightarrow \mathbb{R}$ is $\beta$-strongly smooth w.r.t. a norm $\|\cdot\|$ if $f$ is everywhere differentiable and if for all $\mathbf{x}, \mathbf{y}$ we have: \begin{align*}f(\mathbf{y}) \leq f(\mathbf{x}) + (\mathbf{y} - \mathbf{x})^T\nabla f(\mathbf{x}) + \frac{\beta}{2}\|\mathbf{y} - \mathbf{x}\|^2.\end{align*}} (w.r.t. $\|\cdot\|_2)$ and $\max_t\lonenorm{\mathbf{x}_t} = \gamma$. For fixed constants $C_1,C_2>0$, let:
	\begin{align*}
	k &= {\left(C_1/\epsilon^4\right)\log^3(d/\delta)\max\left\{1,\beta^2\gamma^4/\lambda^2\right\}},\\
	s &= {\left(C_2/\epsilon^2\right)\log^2(d/\delta)\max\left\{1,\beta \gamma^2/\lambda\right\}}.
	\end{align*}
	Let $\mathbf{w}_*$ be the minimizer of the original objective function $L(\mathbf{w})$ and ${\mathbf{w}}_\mathrm{est}$ be the estimate of $\mathbf{w}_*$ returned by performing Count-Sketch recovery on the minimizer $\mathbf{z}_*$ of the projected objective function $\hat{L}(\mathbf{z})$. Then with probability $1-\delta$ over the choice of $R$,
	\[
	\lpnorm{\mathbf{w}_*-{\mathbf{w}}_\mathrm{est}}{\infty} \le \epsilon \| \mathbf{w}_* \|_1.
	\]
\end{theorem}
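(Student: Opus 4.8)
The plan is to decouple the argument into two stages. First, I would show that the minimizer $\mathbf{z}_*$ of the sketched objective $\hat{L}$ lies close in $\ell_2$ to $R\mathbf{w}_*$, the sketch of the true minimizer. Second, I would run ordinary Count-Sketch recovery (Lemma~\ref{lem:count-sketch}) on $\sqrt{s}\,\mathbf{z}_*$, treating the gap $\mathbf{z}_* - R\mathbf{w}_*$ from the first stage as a small additive perturbation of the input to the recovery procedure. The cyclical dependence flagged after the WM-Sketch update rule---$\mathbf{z}_*$ depends on $R$, so one cannot apply the JL property of Definition~\ref{def:JL} directly to $\mathbf{z}_* - R\mathbf{w}_*$---is sidestepped by instead controlling $\nabla\hat{L}$ at the $R$-independent point $R\mathbf{w}_*$ and invoking strong convexity.

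\emph{Stage 1.} The objective $\hat{L}$ is $\lambda$-strongly convex (the data term is convex in $\mathbf{z}$ and the regularizer contributes $\frac{\lambda}{2}\|\mathbf{z}\|_2^2$), so $\|\mathbf{z}_* - R\mathbf{w}_*\|_2 \le \lambda^{-1}\|\nabla\hat{L}(R\mathbf{w}_*)\|_2$. Since $\nabla L(\mathbf{w}_*) = 0$ and $R$ is linear, $R\,\nabla L(\mathbf{w}_*) = 0$; subtracting this from $\nabla\hat{L}(R\mathbf{w}_*)$ cancels the two regularization terms and leaves $\nabla\hat{L}(R\mathbf{w}_*) = \frac{1}{T}\sum_{t=1}^{T} y_t\big(\ell'(y_t\,(R\mathbf{w}_*)^{T} R\mathbf{x}_t) - \ell'(y_t\,\mathbf{w}_*^{T}\mathbf{x}_t)\big) R\mathbf{x}_t$. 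Because $\ell$ is $\beta$-strongly smooth and convex, $\ell'$ is $\beta$-Lipschitz, so each coefficient is at most $\beta\,|(R\mathbf{w}_*)^{T} R\mathbf{x}_t - \mathbf{w}_*^{T}\mathbf{x}_t|$. I would bound this by the inner-product-preservation consequence of the JL property, $|\langle R\mathbf{a}, R\mathbf{b}\rangle - \langle \mathbf{a},\mathbf{b}\rangle| \le \epsilon_0\|\mathbf{a}\|_2\|\mathbf{b}\|_2$ for a suitable $\epsilon_0 = \epsilon_0(k,s,\delta')$, which holds for the Kane--Nelson sparse-JL construction used here (e.g. via polarization from Definition~\ref{def:JL} applied to $\mathbf{a}\pm\mathbf{b}$). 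With $\|R\mathbf{x}_t\|_2 \le 2\|\mathbf{x}_t\|_2 \le 2\gamma$ and $\|\mathbf{w}_*\|_2 \le \|\mathbf{w}_*\|_1$, this gives $\|\nabla\hat{L}(R\mathbf{w}_*)\|_2 = O(\beta\gamma^2\epsilon_0\|\mathbf{w}_*\|_1)$, hence $\|\mathbf{z}_* - R\mathbf{w}_*\|_2 = O(\lambda^{-1}\beta\gamma^2\epsilon_0\|\mathbf{w}_*\|_1)$. One subtlety here is that JL must hold simultaneously for all $T$ pairs $(\mathbf{w}_*,\mathbf{x}_t)$ (and for the $\mathbf{x}_t$ alone), costing a $\log T$ factor; this is absorbed into $\log(d/\delta)$ under the mild assumption $T = \mathrm{poly}(d)$, or one pays $\log(Td/\delta)$ throughout.

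\emph{Stage 2.} Since $R = s^{-1/2}A$ we have $\sqrt{s}\,\mathbf{z}_* = A\mathbf{w}_* + \mathbf{e}$ with $\mathbf{e} = \sqrt{s}\,(\mathbf{z}_* - R\mathbf{w}_*)$, so recovery is applied to the exact Count-Sketch of $\mathbf{w}_*$ corrupted by an additive vector. Each row-$j$ estimate of coordinate $i$ is shifted by $\pm e_{j,h_j(i)}$, so the median estimate shifts by at most $\|\mathbf{e}\|_\infty$, and Lemma~\ref{lem:count-sketch} yields $\lpnorm{\mathbf{w}_* - \mathbf{w}_\mathrm{est}}{\infty} \le \epsilon'\norm{\mathbf{w}_*} + \|\mathbf{e}\|_\infty$ for width $\Theta(1/\epsilon'^2)$ and depth $\Theta(\log(d/\delta))$; this robust form is insensitive to the correlation between $\mathbf{e}$ and the recovery hashes. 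Using $\|\mathbf{e}\|_\infty \le \sqrt{s}\,\|\mathbf{z}_* - R\mathbf{w}_*\|_2 = O(\sqrt{s}\,\lambda^{-1}\beta\gamma^2\epsilon_0\|\mathbf{w}_*\|_1)$ and $\|\mathbf{w}_*\|_2 \le \|\mathbf{w}_*\|_1$, I would set $\epsilon' = \epsilon/2$ (fixing the width at $\Theta(1/\epsilon^2)$) and choose $\epsilon_0 = \Theta(\epsilon\lambda/(\sqrt{s}\,\beta\gamma^2))$ so the second term is $\le \tfrac{\epsilon}{2}\|\mathbf{w}_*\|_1$. Feeding this $\epsilon_0$, the depth requirement, and the Kane--Nelson constraints relating $\epsilon_0$ to $(k,s)$ into the algebra (with $\delta' = \delta/\mathrm{poly}(d)$ so $\log(1/\delta') = O(\log(d/\delta))$, and a union bound over the $O(d)$ recovery events and the JL events) yields the stated $k$ and $s$ and an overall failure probability $\delta$. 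A sharper per-row averaging argument can remove the $\sqrt{s}$ in $\|\mathbf{e}\|_\infty$, at the cost of handling the dependence between $\mathbf{e}$ and the recovery hashes more carefully.

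The step I expect to be the main obstacle is Stage~1: breaking the reflexive dependence between the sketch state and the gradients that update it. The key idea that makes it go through---and the reason the vanilla Count-Sketch argument does not transfer directly---is to evaluate $\nabla\hat{L}$ at $R\mathbf{w}_*$, where $\mathbf{w}_*$ is fixed independently of $R$, and compare it to $R\,\nabla L(\mathbf{w}_*) = 0$; strong convexity then converts this gradient bound into the desired $\ell_2$ closeness. A secondary wrinkle is that Count-Sketch recovery is a median and hence nonlinear, so the Stage~1 perturbation has to be pushed through the recovery map by its worst-case (Lipschitz) behavior rather than by linearity.
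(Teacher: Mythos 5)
Your two-stage architecture is the same as the paper's: first show that $\mathbf{z}_*$ is close to $R\mathbf{w}_*$ in $\ell_2$, then treat $\sqrt{s}\,\mathbf{z}_* = A\mathbf{w}_* + \mathbf{e}$ as a perturbed Count-Sketch, push $\mathbf{e}$ through the median via $\lpnorm{\cdot}{\infty}\le\norm{\cdot}$, and finish with a triangle inequality against Lemma~\ref{lem:count-sketch}. Your Stage~2 coincides with the paper's ending, including the observation that the worst-case median shift makes correlation between $\mathbf{e}$ and the hashes irrelevant. The genuine gap is in Stage~1. The paper does not argue in the primal: it passes to the dual, where $\beta$-strong smoothness of $\ell$ becomes $1/\beta$-strong convexity of $\ell^*$, and combines Lemmas~\ref{lemma:delta} and~\ref{lemma:norm1} to get $\normsq{\mathbf{z}_*-R\mathbf{w}_*}\le(2\beta/\lambda)\lpnorm{\mathbf{\Delta}}{\infty}^2$ with $\lpnorm{\mathbf{\Delta}}{\infty}\le 2\gamma\theta\lonenorm{\mathbf{w}_*}$, i.e.\ $\norm{\mathbf{z}_*-R\mathbf{w}_*}=O\big(\gamma\theta\sqrt{\beta/\lambda}\,\lonenorm{\mathbf{w}_*}\big)$ for JL error $\theta$. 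Your primal argument ($\lambda$-strong convexity of $\hat{L}$ applied to $\lpnorm{\nabla\hat{L}(R\mathbf{w}_*)}{2}$, with $\ell'$ being $\beta$-Lipschitz) is sound but yields only $\norm{\mathbf{z}_*-R\mathbf{w}_*}=O\big(\theta\beta\gamma^2\lambda^{-1}\lonenorm{\mathbf{w}_*}\big)$, worse by a factor of $\sqrt{\beta\gamma^2/\lambda}$. Carrying this through Stage~2, where $\sqrt{s}\,\theta=\Theta\big(\sqrt{\theta\log(d/\delta)}\big)$, forces $\theta=O\big(\epsilon^2\lambda^2/(\beta^2\gamma^4\log(d/\delta))\big)$ rather than the paper's $\theta=O\big(\epsilon^2\lambda/(\beta\gamma^2\log(d/\delta))\big)$, hence $k$ with $\max\{1,\beta^4\gamma^8/\lambda^4\}$ and $s$ with $\max\{1,\beta^2\gamma^4/\lambda^2\}$. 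In the regime of interest ($\beta\gamma^2/\lambda>1$, e.g.\ $\beta=\gamma=1$, $\lambda<1$) your argument therefore does not establish the theorem at the stated $k$ and $s$; it proves a strictly weaker statement. The missing idea is precisely the dual analysis, which is what converts smoothness of the loss into the square-root dependence on $\beta/\lambda$.

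A secondary issue: applying inner-product preservation to the $T$ pairs $(\mathbf{w}_*,\mathbf{x}_t)$ costs a union bound over $T$ events, and the resulting $\log T$ is not permitted by the theorem (the paper emphasizes that $k$ and $s$ are independent of $T$); your fallback assumption $T=\mathrm{poly}(d)$ is not part of the hypotheses. The paper avoids this by union-bounding only over the $d^2$ pairs of coordinate basis vectors (condition~\ref{eq:ip_preserve}) and deriving $|\mathbf{v}_1^T\mathbf{v}_2-(R\mathbf{v}_1)^T(R\mathbf{v}_2)|\le 2\theta\lonenorm{\mathbf{v}_1}\lonenorm{\mathbf{v}_2}$ for \emph{all} vectors simultaneously (Lemma~\ref{lem:jl_l1}); this is also why the bound is naturally stated in terms of $\gamma=\max_t\lonenorm{\mathbf{x}_t}$ and $\lonenorm{\mathbf{w}_*}$. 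Substituting that $T$-free lemma into your Stage~1 fixes the union-bound issue but not the polynomial loss described above.
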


We note that for standard loss functions such as the logistic loss and the smoothed hinge loss, we have smoothness parameter $\beta = 1$. Moreover, we can assume that input vectors are normalized so that $\|\mathbf{x}_t\|_1 = 1$, and that typically $\lambda < 1$. Given these parameter choices, we can obtain simpler expressions for the sketch size $k$ and sketch depth $s$:
\begin{align*}
  k &= {O\left(\epsilon^{-4} \lambda^{-2} \log^3(d/\delta)\right)},\\
  s &= {O\left(\epsilon^{-2} \lambda^{-1} \log^2(d/\delta)\right)}.
\end{align*}

We defer the full proof of the theorem to Appendix~\ref{sec:proofs}. We now highlight some salient properties of this recovery result:

\minihead{Sub-linear Dimensionality Dependence} Theorem~\ref{thm:batch-recovery} implies that we can achieve error bounded by $\epsilon \|\mathbf{w}_*\|_1$ with a sketch of size only polylogarithmic in the feature dimension $d$---this implies that memory-efficient learning and recovery is possible in the large-$d$ regime that we are interested in. Importantly, the sketch size $k$ is independent of the number of observed examples $T$---this is crucial since our applications involve learning over data streams of possibly unbounded length.

\minihead{Update Time} Recall that the WM-Sketch can be updated in time $O(s \cdot \mathrm{nnz}(\mathbf{x}))$ for a given input vector $\mathbf{x}$. 
Thus, the sketch supports an update time of $O(\epsilon^{-2} \lambda^{-1} \log^2(d/\delta) \cdot \mathrm{nnz}(\mathbf{x}))$ in each iteration.

\minihead{$\ell_2$-Regularization} $k$ and $s$ scale inversely with the strength of $\ell_2$ regularization: this is intuitive because additional regularization will shrink both $\mathbf{w}_*$ and $\mathbf{z}_*$ towards zero. We observe this inverse relationship between recovery error and $\ell_2$ regularization in practice (see Figure \ref{fig:l2err_reg}).

\minihead{Input Sparsity} The recovery error depends on the maximum $\ell_1$-norm $\gamma$ of the data points $\b{x}_t$, and the bound is most optimistic when $\gamma$ is small. Across all of the applications we consider in Sections \ref{sec:experiments} and \ref{sec:applications}, the data points are sparse with small $\ell_1$-norm, and hence the bound is meaningful across a number of real-world settings. 

\minihead{Weight Sparsity} The per-parameter recovery error in Theorem~\ref{thm:batch-recovery} is bounded above by a multiple of the $\ell_1$-norm of the optimal weights $\mathbf{w}_*$ for the uncompressed problem. This supports the intuition that sparse solutions with small $\ell_1$-norm should be more easily recovered. In practice, we can augment the objective with an additional $\|\mathbf{w}\|_1$ (resp. $\|\mathbf{z}\|_1$) term to induce sparsity; this corresponds to elastic net-style composite $\ell_1$/$\ell_2$ regularization on the parameters of the model~\cite{zou2005regularization}.

\minihead{Comparison with Frequency Estimation} We can compare our guarantees for weight estimation in linear classifiers with existing guarantees for frequency estimation. The Count-Sketch requires $\Theta(\epsilon^{-2} \log(d/\delta))$ space and $\Theta(\log(d/\delta))$ update time to obtain frequency estimates $\mathbf{v}_\mathrm{cs}$ with error $\|\mathbf{v} - \mathbf{v}_\mathrm{cs}\|_\infty \leq \epsilon \|\mathbf{v}\|_2$, where $\mathbf{v}$ is the true frequency vector (Lemma~\ref{lem:count-sketch}). The Count-Min Sketch uses $\Theta\left(\epsilon^{-1} \log(d/\delta)\right)$ space and $\Theta(\log(d/\delta))$ update time to obtain frequency estimates $\mathbf{v}_\mathrm{cm}$ with error $\|\mathbf{v} - \mathbf{v}_\mathrm{cm}\|_\infty \leq \epsilon \|\mathbf{v}\|_1$ \citep{cormode2005improved}. Thus, our analysis yields guarantees of a similar form to bounds for frequency estimation in this more general framework, but with somewhat worse polynomial dependence on $1/\epsilon$ and $\log(d/\delta)$, and additional $1/\epsilon$ dependence in the update time.  

\subsection{Online Setting}

We now provide guarantees for WM-Sketch in the online setting. We make two small modifications to WM-Sketch for the convenience of analysis. First, we assume that the iterate $\b{z}_t$ is projected onto a $\ell_2$ ball of radius $D$ at every step. Second, we also assume that we perform the final Count-Sketch recovery on the average $\bar{\b{z}}=\frac{1}{T}\sum_{i=1}^{T}\b{z}_t$ of the weight vectors, instead of on the current iterate $\b{z}_t$. While using this averaged sketch is useful for the analysis, maintaining a duplicate data structure in practice for the purpose of accumulating the average would double the space cost of our method. Therefore, in our
implementation of the WM-Sketch, we simply maintain the current iterate $\mathbf{z}_t$. As we show in the next section this approach achieves good
performance on real-world datasets, in particular when combined with the active set heuristic.

Our guarantee holds in expectation over uniformly random permutations of $\{(\b{x}_1, y_1), \dots, (\b{x}_T, y_T) \}$. In other words, we achieve low recovery error on average over all orderings in which the $T$ data points could have been presented. We believe this condition is necessary to avoid worst-case adversarial orderings of the data points---since the WM-Sketch update at any time step depends on the state of the sketch itself, adversarial orderings can potentially lead to high error accumulation.

\begin{theorem}
	\label{thm:online}
		Let the loss function $\ell$ be $\beta$-strongly smooth (w.r.t. $\|\cdot\|_2)$, and have its derivative bounded by $H$. Assume $\norm{\b{x}_t}\le 1, \max_t\lonenorm{\mathbf{x}_t}=\gamma$, $\norm{\b{w}_*}\le D_2$ and $\lpnorm{\b{w}_*}{1}\le D_1$. Let $G$ be a bound on the $\ell_2$ norm of the gradient at any time step $t$, in our case $G\le H(1+\epsilon\gamma) + \lambda D$. For fixed constants $C_1,C_2,C_3>0$, let:
		\begin{align*}
		k &= {\left(C_1/\epsilon^4\right)\log^3(d/\delta)\max\left\{1,\beta^2\gamma^4/\lambda^2\right\}},\\
		s &= {\left(C_2/\epsilon^2\right)\log^2(d/\delta)\max\left\{1,\beta \gamma^2/\lambda\right\}},\\
		T&\ge (C_3/\epsilon^4)\zeta\log^2(d/\delta)\max\{1,\beta \gamma^2/\lambda\},
		\end{align*}
where $\zeta = (1 / \lambda^2)(D_2/\lpnorm{w_*}{1})^2(G+(1+\epsilon\gamma)H)^2$. Let $\mathbf{w}_*$ be the minimizer of the original objective function $L(\mathbf{w})$ and ${\mathbf{w}}_\mathrm{wm}$ be the estimate $\mathbf{w}_*$ returned by the WM-Sketch algorithm with averaging and projection on the $\ell_2$ ball with radius $D=(D_2+\epsilon D_1)$. Then with probability $1-\delta$ over the choice of $R$,
		\[
		\E[\lpnorm{\mathbf{w}_*-{\mathbf{w}}_\mathrm{wm}}{\infty}] \le \epsilon \| \mathbf{w}_* \|_1,
		\]
		where the expectation is taken with respect to uniformly sampling a permutation in which the samples are received.
\end{theorem}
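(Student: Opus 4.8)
The plan is to reduce the online guarantee to the batch guarantee of Theorem~\ref{thm:batch-recovery}, combined with a regret analysis of online gradient descent and an online-to-batch conversion adapted to the random-permutation model. Write $\mathbf{z}_*$ for the minimizer of the (projected) batch objective $\hat{L}$ and $\bar{\mathbf{z}} = \frac{1}{T}\sum_t \mathbf{z}_t$ for the averaged iterate produced by the algorithm. Since Count-Sketch recovery is $\sqrt{s}$ times a coordinate-wise median, the recovered estimates computed from $\bar{\mathbf{z}}$ and from $R\mathbf{w}_*$ differ coordinate-wise by at most $\sqrt{s}\,\|\bar{\mathbf{z}} - R\mathbf{w}_*\|_\infty \le \sqrt{s}\,\|\bar{\mathbf{z}} - R\mathbf{w}_*\|_2$. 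Hence it suffices to show (i) the intrinsic Count-Sketch error of $\mathbf{w}_*$ itself is at most $\tfrac{\epsilon}{2}\|\mathbf{w}_*\|_1$ --- which is exactly what the stated $k,s$ buy, as in Theorem~\ref{thm:batch-recovery} --- and (ii) $\E_\pi\big[\sqrt{s}\,\|\bar{\mathbf{z}} - R\mathbf{w}_*\|_2\big] \le \tfrac{\epsilon}{2}\|\mathbf{w}_*\|_1$. By the triangle inequality $\|\bar{\mathbf{z}} - R\mathbf{w}_*\|_2 \le \|\bar{\mathbf{z}} - \mathbf{z}_*\|_2 + \|\mathbf{z}_* - R\mathbf{w}_*\|_2$; the second term is controlled, on the same probability-$(1-\delta)$ event over $R$, by the argument behind Theorem~\ref{thm:batch-recovery}, and the choice $D = D_2 + \epsilon D_1$ is made precisely so that $\mathbf{z}_*$ lies in the $\ell_2$ ball of radius $D$ (via the JL property applied to $\mathbf{w}_*$, giving $\|R\mathbf{w}_*\|_2 \le (1+\epsilon)D_2$, plus the batch bound on $\|\mathbf{z}_* - R\mathbf{w}_*\|_2$), so the projection step never moves the batch optimum and Theorem~\ref{thm:batch-recovery} applies verbatim. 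Everything then rests on bounding the online term $\E_\pi\|\bar{\mathbf{z}} - \mathbf{z}_*\|_2$.

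For the online term, each $\hat{L}_t$ is $\lambda$-strongly convex (from the $\tfrac{\lambda}{2}\|\mathbf{z}\|_2^2$ term) and, as recorded in the theorem, has gradients of $\ell_2$-norm at most $G = H(1+\epsilon\gamma) + \lambda D$ on the ball of radius $D$ (using $|\ell'| \le H$ and that the scaled Count-Sketch is approximately norm-preserving). Hence projected OGD with step size $\eta_t = 1/(\lambda t)$ enjoys the standard logarithmic regret bound (e.g.\ \citealp{hazan2016introduction}, Ch.~3): for \emph{every} ordering $\pi$, $\sum_{t=1}^T \hat{L}_{\pi(t)}(\mathbf{z}_t) - \sum_{t=1}^T \hat{L}_{\pi(t)}(\mathbf{z}_*) \le \tfrac{G^2}{2\lambda}(1 + \log T)$, where I have used that the comparator $\min_{\mathbf z}\sum_t \hat{L}_{\pi(t)}(\mathbf z) = T\hat{L}(\mathbf z_*)$ is the same for every $\pi$. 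Because $\hat{L}$ is $\lambda$-strongly convex with minimizer $\mathbf{z}_*$ and $\bar{\mathbf{z}}$ is the average of the iterates, Jensen's inequality gives $\|\bar{\mathbf{z}} - \mathbf{z}_*\|_2^2 \le \tfrac{2}{\lambda}\big(\hat{L}(\bar{\mathbf{z}}) - \hat{L}(\mathbf{z}_*)\big) \le \tfrac{2}{\lambda T}\sum_t\big(\hat{L}(\mathbf{z}_t) - \hat{L}(\mathbf{z}_*)\big)$, so it is enough to bound the expected average sub-optimality of the iterates on the \emph{full} objective.

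The crux is converting the worst-case regret, which is measured on the realized sequence $\hat{L}_{\pi(1)},\dots,\hat{L}_{\pi(T)}$, into a bound on $\E_\pi\big[\tfrac{1}{T}\sum_t(\hat{L}(\mathbf{z}_t) - \hat{L}(\mathbf{z}_*))\big]$. Since $\mathbf{z}_*$ is fixed, $\E_\pi[\tfrac{1}{T}\sum_t \hat{L}_{\pi(t)}(\mathbf{z}_*)] = \hat{L}(\mathbf{z}_*)$ exactly, so the regret bound already controls $\E_\pi[\tfrac1T\sum_t \hat{L}_{\pi(t)}(\mathbf{z}_t)] - \hat{L}(\mathbf{z}_*)$; what is missing is a one-sided comparison between $\E_\pi[\hat{L}_{\pi(t)}(\mathbf{z}_t)]$ and $\E_\pi[\hat{L}(\mathbf{z}_t)]$. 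These are \emph{not} equal, because conditioned on the first $t-1$ points the index $\pi(t)$ is uniform only over the \emph{remaining} points (sampling without replacement), and this is exactly where the random-permutation assumption earns its keep: an adversarial ordering could concentrate error, whereas in expectation over $\pi$ the discrepancy is lower order. I would control it with an online-to-batch argument tailored to sampling without replacement, using that the losses are uniformly bounded on the ball of radius $D$ (by some $M = O(GD)$) and that $\mathbf{z}_t$ depends only mildly on which specific points have already been consumed. This is the step I expect to be the main obstacle; it is also what forces the lower bound on $T$, and the factor $\zeta = (1/\lambda^2)(D_2/\|\mathbf{w}_*\|_1)^2(G+(1+\epsilon\gamma)H)^2$ collects the gradient/variance scale $(G+(1+\epsilon\gamma)H)^2$, the strong-convexity $1/\lambda^2$, and the ratio $D_2/\|\mathbf{w}_*\|_1$ that surfaces when a bound on $\|\bar{\mathbf{z}}-\mathbf{z}_*\|_2$ is pushed back through the Count-Sketch recovery scale $\sqrt{s}$.

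Finally I would assemble the pieces. Combining the regret bound, the online-to-batch conversion, and the strong-convexity inequality gives $\E_\pi\|\bar{\mathbf{z}} - \mathbf{z}_*\|_2^2 = O\!\big(\tfrac{G^2 \log T}{\lambda^2 T}\big)$ plus the lower-order correction; Jensen ($\E\|\cdot\|_2 \le \sqrt{\E\|\cdot\|_2^2}$) turns this into a bound on $\E_\pi\|\bar{\mathbf{z}} - \mathbf{z}_*\|_2$, and the stated lower bound on $T$ makes $\sqrt{s}$ times this at most $\tfrac{\epsilon}{4}\|\mathbf{w}_*\|_1$. The batch term $\sqrt{s}\,\|\mathbf{z}_* - R\mathbf{w}_*\|_2$ contributes another $\tfrac{\epsilon}{4}\|\mathbf{w}_*\|_1$ by the choice of $k$ and $s$, and the intrinsic Count-Sketch error of $\mathbf{w}_*$ (Lemma~\ref{lem:count-sketch} with width $k/s$ and depth $s$) contributes $\tfrac{\epsilon}{2}\|\mathbf{w}_*\|_1$, all on the probability-$(1-\delta)$ event over $R$. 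Summing the three contributions and taking expectation over $\pi$ yields $\E_\pi[\|\mathbf{w}_* - \mathbf{w}_\mathrm{wm}\|_\infty] \le \epsilon\|\mathbf{w}_*\|_1$, as claimed.
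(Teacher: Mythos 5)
Your architecture matches the paper's almost exactly: decompose $\|\mathbf{w}_*-\mathbf{w}_\mathrm{wm}\|_\infty$ via the triangle inequality into the batch recovery error of Theorem~\ref{thm:batch-recovery} plus a term controlled by $\sqrt{s}\,\|\bar{\mathbf{z}}-\mathbf{z}_*\|_2$ (using that medians differ by at most the $\ell_\infty$ norm of the difference), bound $\|\bar{\mathbf{z}}-\mathbf{z}_*\|_2$ through $\lambda$-strong convexity of $\hat{L}$ and Jensen applied to the averaged iterate, verify via the JL property that the ball of radius $D=D_2+\epsilon D_1$ contains $\mathbf{z}_*$ and that $G\le H(1+\epsilon\gamma)+\lambda D$, and let the lower bound on $T$ absorb the resulting $\sqrt{s}/\sqrt{T}$ factor. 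This is the paper's proof.

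However, there is a genuine gap at precisely the step you flag as ``the main obstacle.'' Your regret bound controls $\E_\pi\bigl[\tfrac1T\sum_t \hat{L}_{\pi(t)}(\mathbf{z}_t)\bigr]-\hat{L}(\mathbf{z}_*)$, but the strong-convexity argument needs $\E_\pi\bigl[\tfrac1T\sum_t \hat{L}(\mathbf{z}_t)\bigr]-\hat{L}(\mathbf{z}_*)$, i.e., the iterates evaluated on the \emph{full} objective rather than on the loss of the point about to be consumed. As you correctly observe, these differ under sampling without replacement because $\pi(t)$ conditioned on the past is uniform only over the unconsumed points, and an ``online-to-batch argument tailored to sampling without replacement'' is exactly what is required --- but you do not supply it, and it is not routine. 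The paper closes this gap by invoking Corollary~1 of \citet{shamir2016sgd}, which states (in this setting) that $\E_\pi\bigl[\tfrac1T\sum_t \hat{L}(\mathbf{z}_t)\bigr]-\hat{L}(\mathbf{z}_*)\le C\bigl(R_T/\sqrt{T}+BDH\bigr)/\sqrt{T}$, with $R_T$ the worst-case regret; combined with the Zinkevich bound $R_T\le GD\sqrt{T}$ this yields the $CD(G+BH)/\sqrt{T}$ rate that the lower bound on $T$ is calibrated against. Without that lemma (or a proof of an equivalent without-replacement conversion) your argument does not close. A secondary, non-fatal difference: you propose the logarithmic regret bound for strongly convex losses with $\eta_t=1/(\lambda t)$, whereas the paper feeds the $O(GD\sqrt{T})$ regret bound into Shamir's corollary; either would work once the conversion lemma is in hand, but the constants in $\zeta$ and the stated threshold on $T$ are tuned to the paper's choice.
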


Theorem~\ref{thm:online} shows that in this restricted online setting, we achieve a bound with the same scaling of the sketch parameters $k$ and $s$ as the batch setting (Theorem~\ref{thm:batch-recovery}).  Again, we defer the full proof to Appendix~\ref{sec:ol_proof}.

Intuitively, it seems reasonable to expect that we would need an ``average case'' ordering of the stream in order to obtain a similar recovery guarantee to the batch setting. An adversarial, worst-case ordering of the examples could be one where all the negatively-labeled examples are first presented, followed by all the positively-labeled examples. In such a setting, it appears implausible that a single-pass online algorithm should be able to accurately estimate the weights obtained by a batch algorithm that is allowed multiple passes over the data.

\section{Evaluation}
\label{sec:experiments}

In this section, we evaluate the Weight-Median Sketch on three standard binary classification datasets. Our goal here is to compare the WM-Sketch and AWM-Sketch against alternative limited-memory methods in terms of (1) recovery error in the estimated top-$K$ weights, (2) classification error rate, and (3) runtime performance. In the next section, we explore specific applications of the WM-Sketch in stream processing tasks.

\subsection{Datasets and Experimental Setup}

We evaluated our proposed sketches on several standard benchmark datasets as well as in the context of specific streaming applications. Table~\ref{tab:datasets} lists summary statistics for these datasets.

\minihead{Classification Datasets} We evaluate the recovery error on $\ell_2$-regularized online logistic regression trained on three standard binary classification datasets: Reuters RCV1 \citep{lewis2004rcv1}, malicious URL identification \citep{ma2009identifying}, and the \texttt{Algebra} dataset from the KDD Cup 2010 large-scale data mining competition \citep{stamper2010algebra,yu2010feature}. We use the standard training split for each dataset except for the RCV1 dataset, where we use the larger ``test'' split as is common in experimental evaluations using this dataset~\citep{golovin2013large}.

\begin{table}\centering
  \begin{tabular}{@{}lrrr@{}}\toprule
  Dataset & \# Examples & \# Features & Space (MB) \\
  \midrule
  Reuters RCV1 & $6.77\times 10^5$ & $4.72\times 10^4$ & 0.4 \\ 
  Malicious URLs & $2.40\times 10^6$ & $3.23\times 10^6$ & 25.8 \\
  KDD Cup Algebra & $8.41 \times 10^6$ & $2.02 \times 10^7$ & 161.8 \\
  \midrule
  Senate/House Spend. & $4.08\times 10^7$ & $5.14\times 10^5$ & 4.2 \\
  Packet Trace & $1.86\times10^7$ & $1.26\times 10^5$ & 1.0 \\
  Newswire & $2.06\times10^9$ & $4.69\times 10^7$ & 375.2 \\
  \bottomrule
  \end{tabular}
  \caption{Summary of benchmark datasets with the space cost of representing full weight vectors and feature identifiers using 32-bit values. The first set of three consists of standard binary classification datasets used in Sec.~\ref{sec:experiments}; the second set consists of datasets specific to the applications in Sec.~\ref{sec:applications}.}
  \label{tab:datasets}
\end{table}

\begin{table}\centering
  \begin{tabular}{@{}lrrrcrrr@{}}\toprule
   & \multicolumn{3}{c}{WM-Sketch} &\phantom{a}& \multicolumn{3}{c}{AWM-Sketch}  \\
  \cmidrule{2-4} \cmidrule{6-8}
  Budget (KB)  & \multicolumn{1}{c}{$|S|$} & width & depth && \multicolumn{1}{c}{$|S|$} & width & depth \\
   \midrule
  2 & 128 & 128 & 2 && 128 & 256 & 1 \\
  4 & 256 & 256 & 2 && 256 & 512 & 1 \\
  8 & 128 & 128 & 14 && 512 & 1024 & 1 \\
  16 & 128 & 128 & 30 && 1024 & 2048 & 1 \\
  32 & 128 & 256 & 31 && 2048 & 4096 & 1 \\
  \bottomrule
  \end{tabular}
\caption{Sketch configurations with minimal $\ell_2$ recovery error on \texttt{RCV1} dataset ($|S|$ denotes heap capacity).}
\label{tab:l2_configs}
\end{table}

For each dataset, we make a single pass through the set of examples. Across all our experiments, we use an initial learning rate $\eta_0 = 0.1$ and $\lambda \in \{10^{-3}, 10^{-4}, 10^{-5}, 10^{-6}\}$. We used the following set of space constraints: 2KB, 4KB, 8KB, 16KB and 32KB. For each setting of the space budget and for each method, we evaluate a range of configurations compatible with that space constraint; for example, for evaluating the WM-Sketch, this corresponds to varying the space allocated to the heap and the sketch, as well as trading off between the sketch depth $s$ and the width $k/s$. For each setting, we run 10 independent trials with distinct random seeds; our plots show medians and the range between the worst and best run. 

\minihead{Memory Cost Model} In our experiments, we control for memory usage and configure each method to satisfy the given space constraints using the following cost model: we charge 4B of memory utilization for each feature identifier, feature weight, and auxiliary weight (e.g., random keys in Algorithm~\ref{alg:prob-truncation} or counts in the Space Saving baseline) used. For example, a simple truncation instance (Algorithm~\ref{alg:truncation}) with 128 entries uses 128 identifiers and 128 weights, corresponding to a memory cost of 1024B.

\subsection{Recovery Error Comparison}

We measure the accuracy to which our methods are able to recover the top-$K$ weights in the model using the following relative $\ell_2$ error metric:
\begin{equation*}
  \mathrm{RelErr}(\mathbf{w}^K, \mathbf{w}_*) = \frac{\| \mathbf{w}^K - \mathbf{w}_* \|_2}{\| \mathbf{w}_*^K - \mathbf{w}_* \|_2},
\end{equation*}
where $\mathbf{w}^K$ is the $K$-sparse vector representing the top-$K$ weights returned by a given method, $\mathbf{w}_*$ is the weight vector obtained by the uncompressed model, and $\mathbf{w}_*^K$ is the $K$-sparse vector representing the true top-$K$ weights in $\mathbf{w}_*$. The relative error metric is therefore bounded below by $1$ and quantifies the relative suboptimality of the estimated top-$K$ weights.
The best configurations for the WM- and AWM-Sketch on \texttt{RCV1} are listed in Table~\ref{tab:l2_configs}; the optimal configurations for the remaining datasets are similar.

We compare our methods across datasets (Fig.~\ref{fig:l2err_comp}) and across memory constraints on a single dataset (Fig.~\ref{fig:l2err_space}). For clarity, we omit the Count-Min Frequent Features baseline since we found that the Space Saving baseline achieved consistently better performance. We found that the AWM-Sketch consistently achieved lower recovery error than alternative methods on our benchmark datasets. The Space Saving baseline is competitive on \texttt{RCV1} but underperforms the simple Probabilistic Truncation baseline on \texttt{URL}: this demonstrates that tracking frequent features can be effective if frequently-occurring features are also highly discriminative, but this property does not hold across all datasets. Standard feature hashing achieves poor recovery error since colliding features cannot be disambiguated.

In Fig.~\ref{fig:l2err_reg}, we compare recovery error on \texttt{RCV1} across different settings of $\lambda$. Higher $\ell_2$-regularization results in less recovery error since both the true weights and the sketched weights are closer to 0; however, $\lambda$ settings that are too high can result in increased classification error.

\begin{figure*}
  \includegraphics[width=\textwidth]{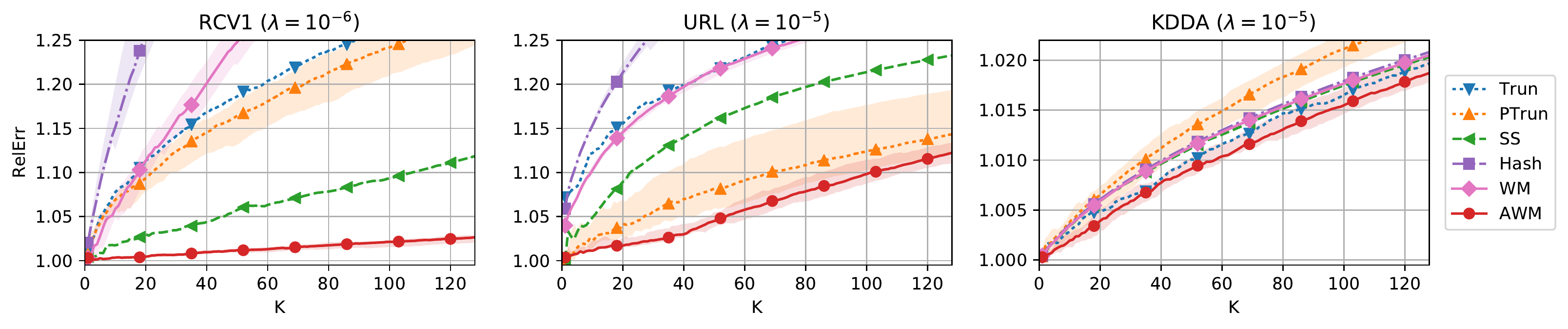}
  \caption{Relative $\ell_2$ error of estimated top-$K$ weights vs. true top-$K$ weights for $\ell_2$-regularized logistic regression under 8KB memory budget. Shaded area indicates range of errors observed over 10 trials. The AWM-Sketch achieves lower recovery error across all three datasets.}
  \label{fig:l2err_comp}
\end{figure*}

\begin{figure*}
  \includegraphics[width=\textwidth]{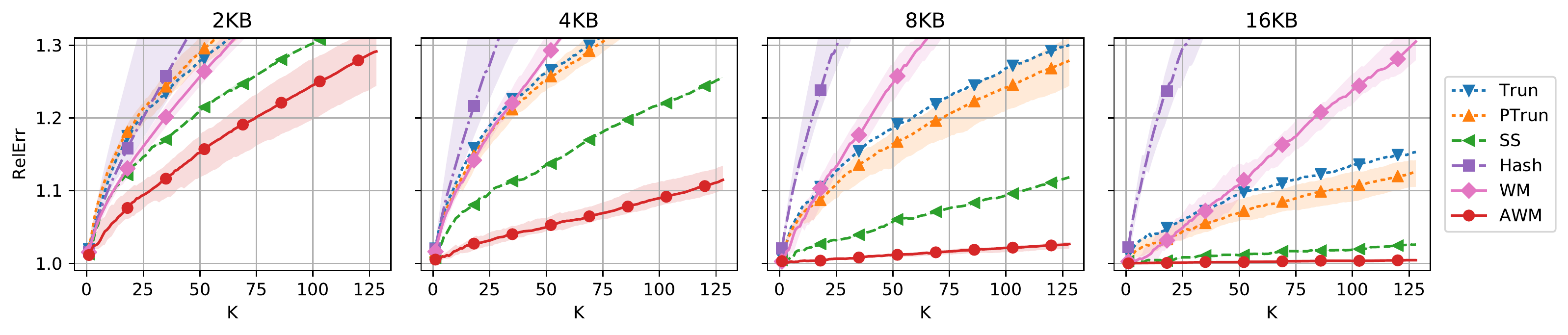}
  \caption{Relative $\ell_2$ error of estimated top-$K$ weights on \texttt{RCV1} dataset under different memory budgets ($\lambda = 10^{-6}$). Shaded area indicates range of errors observed over 10 trials. The recovery quality of the AWM-Sketch quickly improves with more allocated space.}
  \label{fig:l2err_space}
\end{figure*}

\begin{figure}
  \includegraphics[width=0.75\textwidth]{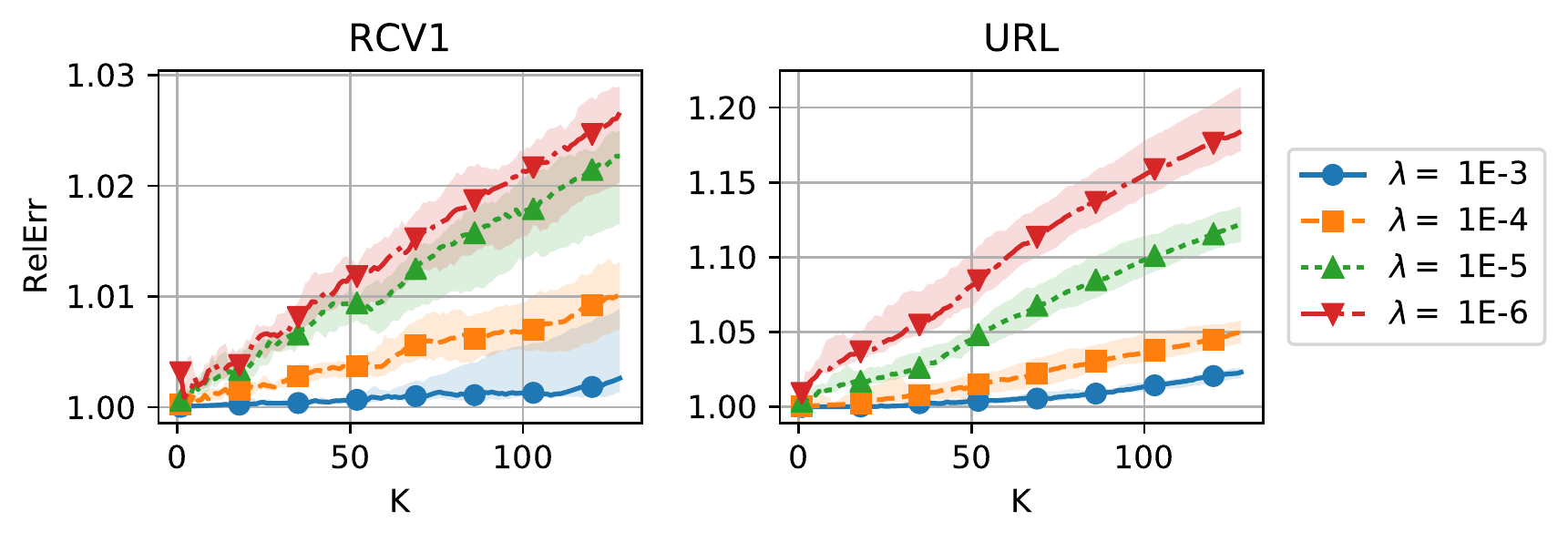}\centering
  \caption{Relative $\ell_2$ error of top-$K$ AWM-Sketch estimates with varying regularization parameter $\lambda$ on \texttt{RCV1} and \texttt{URL} datasets under 8KB memory budget.}
  \label{fig:l2err_reg}
\end{figure}

\subsection{Classification Error Rate}

\begin{figure*}
  \includegraphics[width=\textwidth]{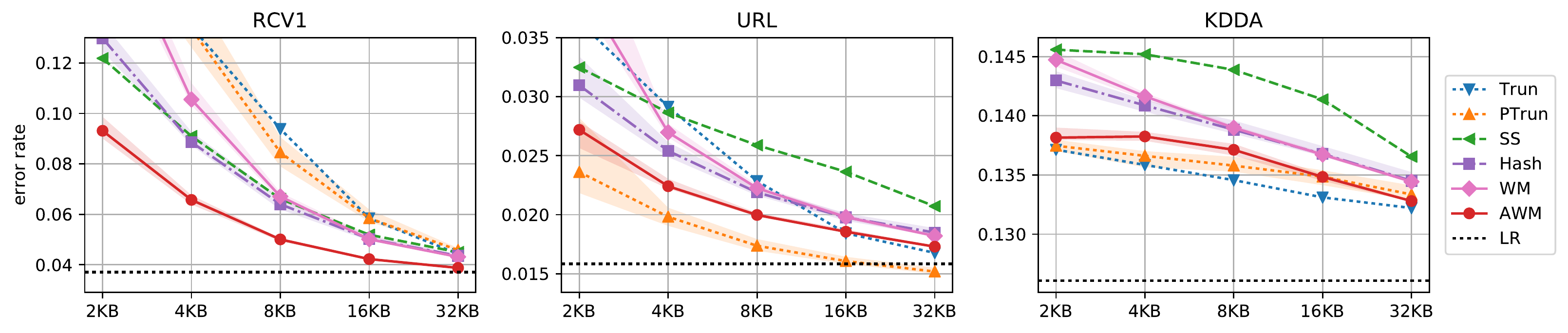}
  \caption{Online classification error rates with $\ell_2$-regularized logistic regression under different memory budgets (\texttt{Trun} = Simple Truncation, \texttt{PTrun} = Probabilistic Truncation, \texttt{SS} = Space Saving Frequent, \texttt{Hash} = Feature Hashing, \texttt{LR} = Logistic Regression without memory constraints). The AWM-Sketch consistently achieves better classification accuracy than methods that track frequent features.}
  \label{fig:err-comp}
\end{figure*}

We evaluated the classification performance of our models by measuring their online error rate \citep{blum1999beating}: for each observed pair $(\mathbf{x}_t, y_t)$, we record whether the prediction $\hat{y}_t$ (made without observing $y_t$) is correct before updating the model. The error rate is defined as the cumulative number of mistakes made divided by the number of iterations. Our results here are summarized in Fig.~\ref{fig:err-comp}. For each dataset, we used the value of $\lambda$ that achieved the lowest error rate across all our memory-limited methods. For each method and for each memory budget, we chose the configuration that achieved the lowest error rate. For the WM-Sketch, this corresponded to a width of $2^7$ or $2^8$ with depth scaling proportionally with the memory budget; for the AWM-Sketch, the configuration that uniformly performed best allocated half the space to the active set and the remainder to a depth-1 sketch (i.e., a single hash table without any replication).

We found that across all tested memory constraints, the AWM-Sketch consistently achieved lower error rate than heavy-hitter-based methods. Surprisingly, the AWM-Sketch outperformed feature hashing by a small but consistent margin: 0.5--3.7\% on \texttt{RCV1}, 0.1--0.4\% on \texttt{URL}, and 0.2--0.5\% on \texttt{KDDA}, with larger gains seen at smaller memory budgets. This suggests that the AWM-Sketch benefits from the precise representation of the largest, most-influential weights in the model, and that these gains are sufficient to offset the increased collision rate due to the smaller hash table. The Space Saving baseline exhibited inconsistent performance across the three datasets, demonstrating that tracking the most frequent features is an unreliable heuristic: features that occur frequently are not necessarily the most predictive. We note that higher values of the regularization parameter $\lambda$ correspond to greater penalization of rarely-occurring features; therefore, we would expect the Space Saving baseline to better approximate the performance of the unconstrained classifier as $\lambda$ increases.

\subsection{Runtime Performance}

\begin{figure}\centering
  \includegraphics[width=0.75\textwidth]{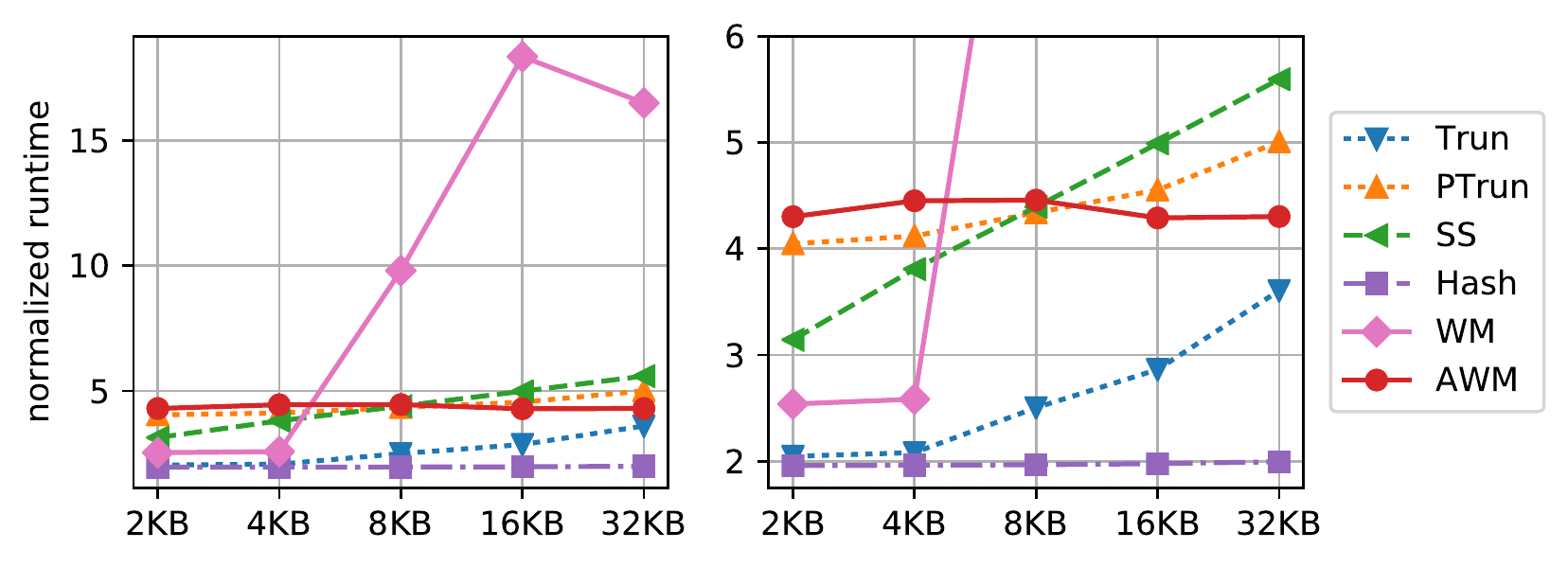}
  \caption{Normalized runtime of each method vs. memory-unconstrained logistic regression on \texttt{RCV1} using configurations that minimize recovery error (see Table~\ref{tab:l2_configs}). The right panel is a zoomed-in view of the left panel.}
  \label{fig:runtime}
\end{figure}

We evaluated runtime performance relative to a memory unconstrained logistic regression model using the same configurations as those chosen to minimize $\ell_2$ recovery error (Table~\ref{tab:l2_configs}). In all our timing experiments, we ran our implementations of the baseline methods, the WM-Sketch, and the AWM-Sketch on Intel Xeon E5-2690 v4 processor with 35MB cache using a single core. The memory-unconstrained logistic regression weights were stored using an array of 32-bit floating point values of size equal to the dimensionality of the feature space, with the highest-weighted features tracked using a min-heap of size $K=128$; reads and writes to the weight vector therefore required single array accesses. The remaining methods tracked heavy weights alongside 32-bit feature identifiers using a min-heap sized according to the corresponding configuration.

In our experiments, the fastest method was feature hashing, with about a 2x overhead over the baseline. This overhead was due to the additional hashing step needed for each read and write to a feature index. The AWM-Sketch incurred an additional 2x overhead over feature hashing due to more frequent heap maintenance operations.

\section{Applications}
\label{sec:applications}

We now show that a variety of tasks in stream processing can be framed as memory-constrained classification. The unifying theme between these applications is that classification is a useful abstraction whenever the use case calls for \emph{discriminating} between streams or between subpopulations of a stream. These distinct classes can be identified by partitioning a single stream into quantiles (Sec.~\ref{sec:explanation}), comparing separate streams (Sec.~\ref{sec:monitoring}), or even by generating \emph{synthetic} examples to be distinguished from real samples (Sec.~\ref{sec:pmi}).

\subsection{Streaming Explanation}
\label{sec:explanation}

In data analysis workflows, it is often necessary to identify characteristic attributes that are particularly indicative of a given subset of data \citep{meliou-tutorial}. For example, in order to diagnose the cause of anomalous readings in a sensor network, it is helpful to identify common features of the outlier points such as geographical location or time of day. This use case has motivated the development of methods for finding common properties of outliers found in aggregation queries \citep{wu2013scorpion} and in data streams \citep{bailis2017macrobase}.

This task can be framed as a classification problem: assign positive labels to the outliers and negative labels to the inliers, then train a classifier to discriminate between the two classes. The identification of characteristic attributes is then reduced to the problem of identifying heavily-weighted features in the trained model. In order to identify indicative \emph{conjunctions} of attributes, we can simply augment the feature space to include arbitrary combinations of singleton features.

\begin{figure}\centering
  \includegraphics[width=0.75\textwidth]{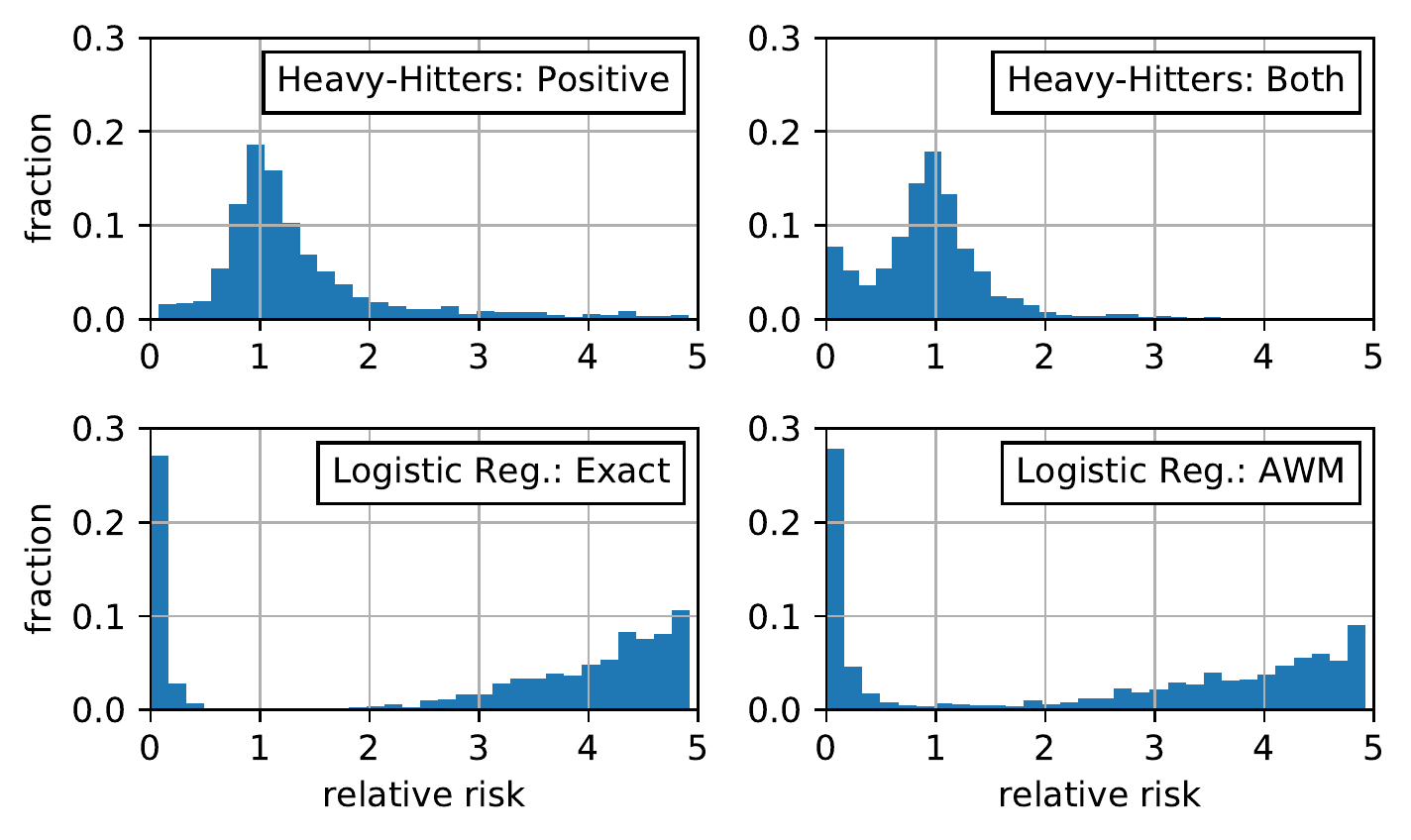}
  \caption{Distribution of relative risks among top-2048 features retrieved by each method. \emph{Top Row}: Heavy-Hitters. \emph{Bottom Row}: Classifier-based methods.}
  \label{fig:comp-explanation}
\end{figure}

\begin{figure}\centering
  \includegraphics[width=0.75\textwidth]{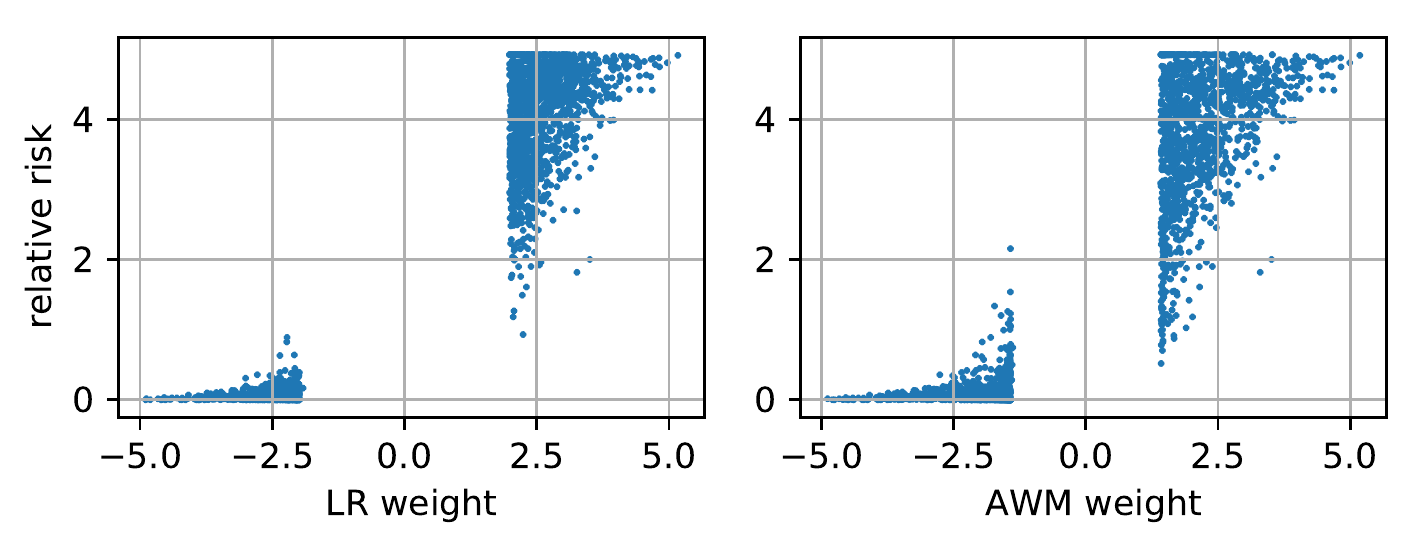}
  \caption{Correlation between top-2048 feature weights and relative risk. \emph{Left}: Memory-unconstrained logistic regression (Pearson correlation $0.95$). \emph{Right}: AWM-Sketch (Pearson correlation $0.91$).}
  \label{fig:campaign-scatter}
\end{figure}

The \emph{relative risk} or \emph{risk ratio} $r_x = p(y=1\mid x=1) / p(y=1 \mid x=0)$ is a statistical measure of the relative occurrence of the positive label $y=1$ when the feature $x$ is active versus when it is inactive. In the context of stream processing, the relative risk has been used to quantify the degree to which a particular attribute or attribute combination is indicative of a data point being an outlier relative to the overall population \citep{bailis2017macrobase}. Here, we are interested in comparing our classifier-based approach to identifying high-risk features against the approach used in \texttt{MacroBase} \citep{bailis2017macrobase}, an existing system for explaining outliers over streams, that identifies candidate attributes using a variant of the Space Saving heavy-hitters algorithm.

\minihead{Experimental Setup} We used a publicly-available dataset of itemized disbursements by candidates in U.S. House and Senate races from 2010--2016.\footnote{FEC candidate disbursements data: \url{http://classic.fec.gov/data/CandidateDisbursement.do}} The outlier points were set to be the set of disbursements in the top-20\% by dollar amount. For each row of the data, we generated a sequence of 1-sparse feature vectors\footnote{We can also generate a single feature vector per row (with sparsity greater than 1), but the learned weights would then correlate more weakly with the relative risk. This is due to the effect of correlations between features.} corresponding to the observed attributes. We set a space budget of 32KB for the AWM-Sketch.

\minihead{Results} Our results are summarized in Figs.~\ref{fig:comp-explanation} and \ref{fig:campaign-scatter}. The former empirically demonstrates that the heuristic of filtering features on the basis of frequency can be suboptimal for a fixed memory budget. This is due to features that are frequent in both the inlier and outlier classes: it is wasteful to maintain counts for these items since they have low relative risk. In Fig.~\ref{fig:comp-explanation}, the top row shows the distribution of relative risks among the most frequent items within the positive class (left) and across both classes (right). In contrast, our classifier-based approaches use the allocated space more efficiently by identifying features at the extremes of the relative risk scale. 

In Fig.~\ref{fig:campaign-scatter}, we show that the learned classifier weights are strongly correlated with the relative risk values estimated from true counts. Indeed, logistic regression weights can be interpreted in terms of log odds ratios, a related quantity to relative risk. These results show that the AWM-Sketch is a superior filter compared to Heavy-Hitters approaches for identifying high-risk features.

\subsection{Network Monitoring}
\label{sec:monitoring}

IP network monitoring is one of the primary application domains for sketches and other small-space summary methods \citep{venkataraman2005new,bandi2007fast,yu2013software}. Here, we focus on the problem of finding packet-level features (for instance, source/destination IP addresses and prefixes, port numbers, network protocols, and header or payload characteristics) that differ significantly in relative frequency between a pair of network links. 

This problem of identifying significant relative differences---also known as \emph{relative deltoids}---was studied by Cormode and Muthukrishnan \citep{cormode2005s}. Concretely, the problem is to estimate---for each item $i$---ratios $\phi(i) = n_1(i) / n_2(i)$ (where $n_1, n_2$ denote occurrence counts in each stream) and to identify those items $i$ for which this ratio, or its reciprocal, is large.  Here, we are interested in identifying differences between traffic streams that are observed \emph{concurrently}; in contrast, the empirical evaluation in \citep{cormode2005s} focused on comparisons between different time periods. 

\minihead{Experimental Setup} We used a subset of an anonymized, publicly-available passive traffic trace dataset recorded at a peering link for a large ISP \citep{caida}. The positive class was the stream of outbound source IP addresses and the negative class was the stream of inbound destination IP addresses. We compared against several baseline methods, including ratio estimation using a pair of Count-Min sketches (as in \citep{cormode2005s}). For each method we retrieved the top-2048 features (i.e., IP addresses in this case) and computed the recall against the set of features above the given ratio threshold, where the reference ratios were computed using exact counts.

\minihead{Results} We found that the AWM-Sketch performed comparably to the memory-unconstrained logistic regression baseline on this benchmark. We significantly outperformed the paired Count-Min baseline by a factor of over 4$\times$ in recall while using the same memory budget, as well as a paired CM baseline that was allocated 8x the memory budget. These results indicate that linear classifiers can be used effectively to identify relative deltoids over pairs of data streams.

\begin{figure}\centering 
  \includegraphics[width=0.75\textwidth]{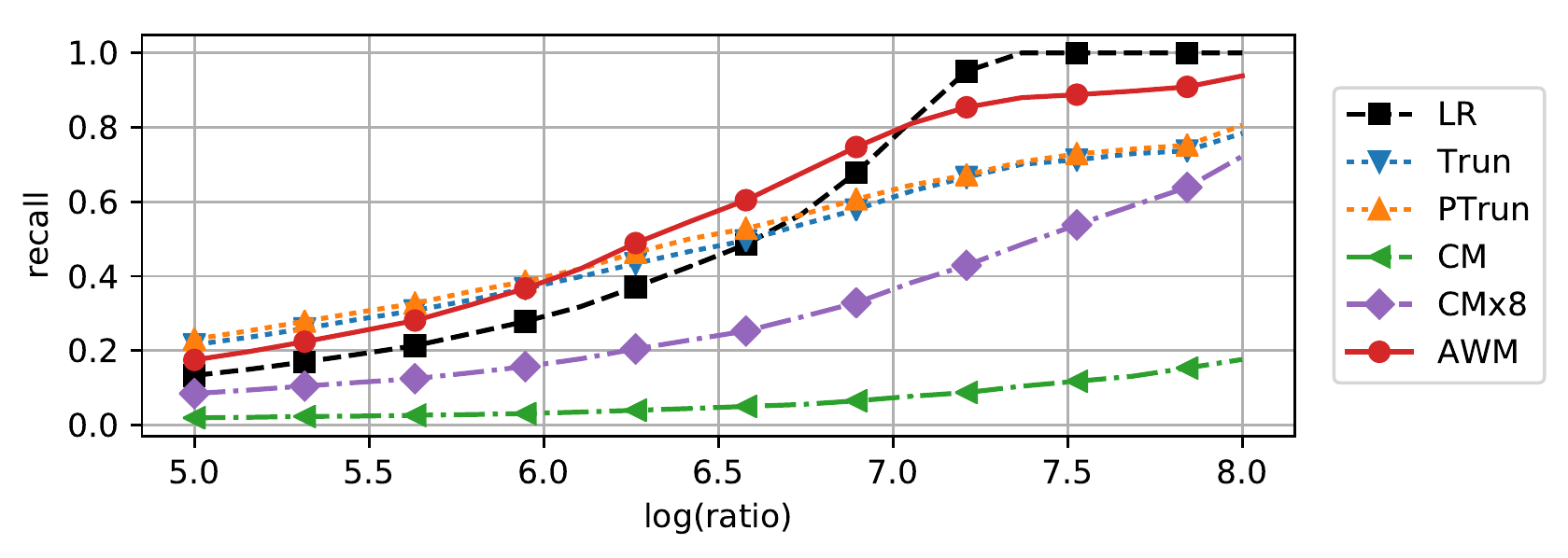}
  \caption{Recall of IP addresses with relative occurrence ratio above the given threshold with 32KB of space. \texttt{LR} denotes recall by full memory-unconstrained logistic regressor. \texttt{CMx8} denotes Count-Min baseline with 8x memory usage.}
  \label{fig:packet}
\end{figure}

\subsection{Streaming Pointwise Mutual Information}
\label{sec:pmi}

Pointwise mutual information (PMI), a measure of the statistical correlation between a pair of events, is defined as:
\begin{equation*}
  \mathrm{PMI}(x, y) = \log\frac{p(x, y)}{p(x)p(y)}.
\end{equation*}
Intuitively, positive values of the PMI indicate events that are positively correlated, negative values indicate events that are negatively correlated, and a PMI of 0 indicates uncorrelated events.

In natural language processing, PMI is a frequently-used measure of word association \citep{turney2010frequency}. Traditionally, the PMI is estimated using empirical counts of unigrams and bigrams obtained from a text corpus. The key problem with this approach is that the number of bigrams in standard natural language corpora can grow very large; for example, we found $\sim$47M unique co-occurring pairs of tokens in a small subset of a standard newswire corpus. This combinatorial growth in the feature dimension is further amplified when considering higher-order generalizations of PMI.

More generally, streaming PMI estimation can be used to detect pairs of events whose occurrences are strongly correlated. For example, we can consider a streaming log monitoring use case where correlated events are potentially indicative of cascading failures or trigger events resulting in exceptional behavior in the system. Therefore, we expect that the techniques developed here should be useful beyond standard NLP applications.

\minihead{Sparse Online PMI Estimation} Streaming PMI estimation using approximate counting has previously been studied \citep{durme2009streaming}; however, this approach has the drawback that memory usage still scales linearly with the number of observed bigrams. Here, we explore streaming PMI estimation from a different perspective: we pose a binary classification problem over the space of bigrams with the property that the model weights asymptotically converge to an estimate of the PMI.\footnote{This classification formulation is used in the popular \texttt{word2vec} skip-gram method for learning word embeddings \citep{mikolov2013efficient}; the connection to PMI approximation was first observed by \citet{levy2014neural}. To our knowledge, we are the first to apply this formulation in the context of sparse PMI estimation.}

The classification problem is set up as follows: in each iteration $t$, with probability $0.5$ sample a bigram $(u, v)$ from the bigram distribution $p(u, v)$ and set $y_t = +1$; with probability $0.5$ sample $(u, v)$ from the unigram product distribution $p(u)p(v)$ and set $y_t = -1$. The input $\mathbf{x}_t$ is the 1-sparse vector where the index corresponding to $(u, v)$ is set to $1$. We train a logistic regression model to discriminate between the \emph{true} and \emph{synthetic} samples. If $\lambda = 0$, the model asymptotically converges to the distribution $\hat{p}(y=1 \mid (u, v)) = f(w_{uv}) = p(u, v) / \left(p(u, v) + p(u)p(v)\right)$ for all pairs $(u, v)$, where $f$ is the logistic function. It follows that $w_{uv} = \log (p(u, v) / p(u)p(v))$, which is exactly the PMI of $(u, v)$. If $\lambda > 0$, we obtain an estimate that is biased, but with reduced variance in the estimates for rare bigrams.

\begin{table}\centering
  \begin{tabular}{@{}lrrclr@{}}\toprule
  Pair & PMI & Est. & \phantom{a} & Pair & PMI \\
  \cmidrule{1-3} \cmidrule{5-6}
  prime minister & 6.339 & 7.609 && , the & 0.044 \\
  los angeles & 7.197 & 7.047 && the , & -0.082 \\
  http / & 6.734 & 7.001 && the of & 0.611 \\
  human rights &6.079 & 6.721 && the . & 0.057 \\
  \bottomrule
  \end{tabular}
  \caption{\textit{Left:} Top recovered pairs with PMI computed from true counts and PMI estimated from model weights ($2^{16}$ bins, 1.4MB total memory). \textit{Right:} Most common pairs in corpus.}
  \label{tab:pmi-output}
\end{table}

\begin{figure}\centering
  \includegraphics[width=0.75\textwidth]{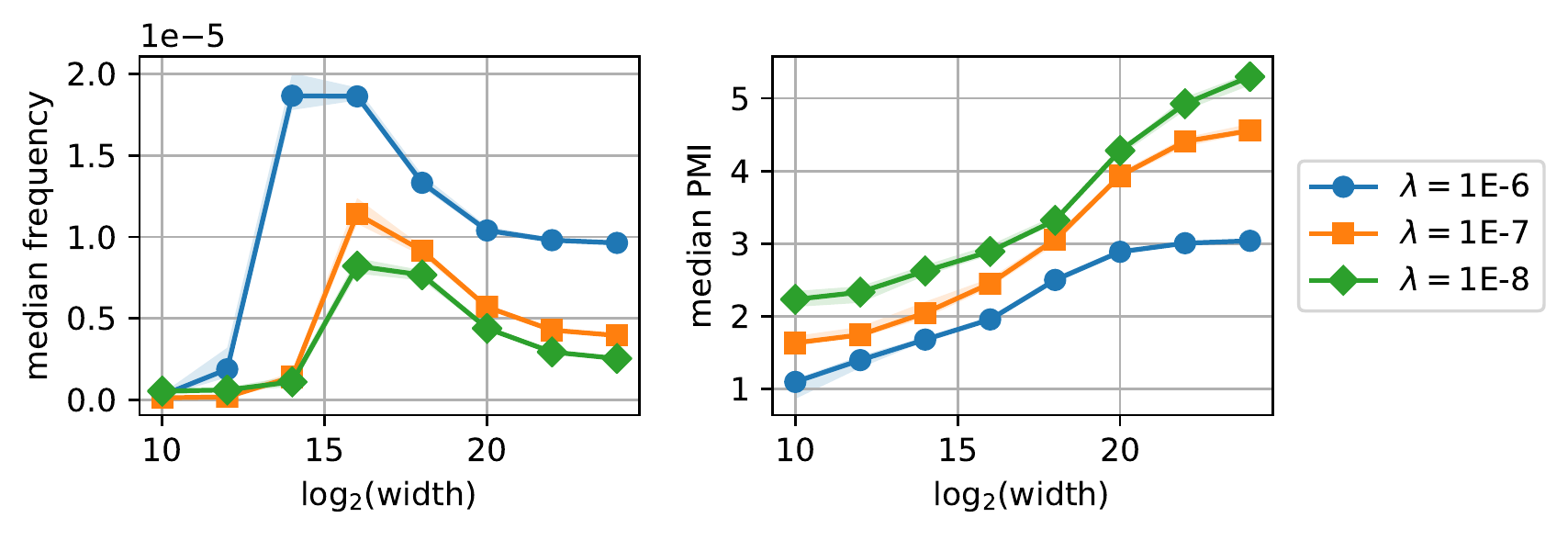}
  \caption{Median frequencies and exact PMIs of retrieved pairs with Active-Set Weight-Median Sketch estimation. Lower $\lambda$ settings and higher bin counts favor less frequent pairs.}
  \label{fig:pmi-plots}
\end{figure}

\minihead{Experimental Setup} We train on a subset of a standard newswire corpus \citep{chelba2013one}; the subset contains 77.7M tokens, 605K unique unigrams and 47M unique bigrams over a sliding window of size 6. In our implementation, we approximate sampling from the unigram distribution by sampling from a reservoir sample of tokens \citep{may2017streaming,kaji2017incremental}. We estimated weights using the AWM-Sketch with heap size 1024 and depth 1; the reservoir size was fixed at 4000. We make a single pass through the dataset and generate 5 negative samples for every true sample. Strings were first hashed to 32-bit values using \texttt{MurmurHash3};\footnote{\url{https://github.com/aappleby/smhasher/wiki/MurmurHash3}} these identifiers were hashed again to obtain sketch bucket indices.

\minihead{Results} For width settings up to $2^{16}$, our implementation's total memory usage was at most $1.4$MB. In this regime, memory usage was dominated by the storage of strings in the heap and the unigram reservoir. For comparison, the standard approach to PMI estimation requires 188MB of space to store exact 32-bit counts for all bigrams, excluding the space required for storing strings or the token indices corresponding to each count.  In Table~\ref{tab:pmi-output}, we show sample pairs retrieved by our method; the PMI values estimated from exact counts are well-estimated by the classifier weights. In Fig.~\ref{fig:pmi-plots}, we show that at small widths, the high collision rate results in the retrieval of noisy, low-PMI pairs; as the width increases, we retrieve higher-PMI pairs which typically occur with lower frequency. Further, regularization helps discard low-frequency pairs but can result in the model missing out on high-PMI but less-frequent pairs.

\section{Discussion}
\label{sec:discussion}

\vspace{-2pt}
\minihead{Active Set vs. Multiple Hashing} In the basic WM-Sketch, multiple hashing is needed in order to disambiguate features that collide in a heavy bucket; we should expect that features with truly high weight should correspond to large values in the majority of buckets that they hash to. The active set approach uses a different mechanism for disambiguation. Suppose that all the features that hash to a heavy bucket are added to the active set; we should expect that the weights for those features that were erroneously added will eventually decay (due to $\ell_2$-regularization) to the point that they are evicted from the active set. Simultaneously, the truly high-weight features are retained in the active set. The AWM-Sketch can therefore be interpreted as a variant of feature hashing where the highest-weighted features are not hashed.

\minihead{The Cost of Interpretability}
We initially motivated the development of the WM-Sketch with the dual goals of memory-efficient learning and model interpretability. A natural question to ask is: what is the cost of interpretability? What do we sacrifice in classification accuracy when we allocate memory to storing feature identifiers relative to feature hashing, which maintains only feature weights?
A surprising finding in our evaluation on standard binary classification datasets was that the AWM-Sketch consistently improved on the classification accuracy of feature hashing. We hypothesize that the observed gains are due to reduced collisions with heavily-weighted features.
Notably, we are able to improve model interpretability by identifying important features without sacrificing any classification accuracy.

\minihead{Per-Feature Learning Rates} In previous work on online learning applications, practitioners have found that the per-feature learning rates can significantly improve classification performance \citep{mcmahan2013ad}. An open question is whether variable learning rate across features is worth the associated memory cost in the streaming setting.

\minihead{Multiclass Classification} The WM-Sketch be extended to the multiclass setting using the following simple extension. Given $M$ output classes, maintain $M$ copies of the WM-Sketch. In order to predict the output, we evaluate the output on each copy and return the maximum. For large $M$, for instance in language modeling applications, this procedure can be computationally expensive since update time scales linearly with $M$. In this regime, we can apply noise contrastive estimation \citep{gutmann2010noise}---a standard reduction to binary classification---to learn the model parameters.

\minihead{Further Extensions} The WM-Sketch is likely amenable to further extensions that can improve update time and further reduce memory usage. Our method is equivalent to online gradient descent using random projections of input features, and gradient updates can be performed asynchronously with relaxed cache coherence requirements between cores \citep{recht2011hogwild}. Additionally, our methods are orthogonal to reduced-precision techniques like randomized rounding \citep{raghavan1987randomized} and approximate counting \citep{flajolet1985approximate}; these methods can be used in combination to realize further memory savings.

\vspace*{-4pt}
\section{Conclusions}
\label{sec:conclusions}

In this paper, we introduced the Weight-Median Sketch for the problem of identifying heavily-weighted features in linear classifiers over streaming data. We showed theoretical guarantees for our method, drawing on techniques from online learning and norm-preserving random projections. In our empirical evaluation, we showed that the active set extension to the basic WM-Sketch achieves superior weight recovery and competitive classification error compared to baseline methods across several standard binary classification benchmarks. Finally, we explored promising applications of our methods by framing existing stream processing tasks as classification problems. We believe this machine learning perspective on sketch-based stream processing may prove to be a fruitful direction for future research in advanced streaming analytics.

\section*{Acknowledgments}
We thank Daniel Kang, Sahaana Suri, Pratiksha Thaker, and the anonymous reviewers for their feedback on earlier drafts of this work. This research was supported in part by affiliate members and other supporters of the Stanford DAWN
project---Google, Intel, Microsoft, Teradata, and VMware---as well as
industrial gifts and support from Toyota Research Institute, Juniper Networks, Keysight Technologies,
Hitachi, Facebook, Northrop Grumman, and NetApp.
The authors were also supported by 
DARPA under No. FA8750-17-2-0095
(D3M), ONR Award N00014-17-1-2562, NSF Award CCF-1704417, and a Sloan Research Fellowship.

\bibliography{references} 

\appendix
\section{Proofs}
\subsection{Proof of Theorem \ref{thm:batch-recovery}}\label{sec:proofs}

We will use the duals of $L(\b{w})$ and $\hat{L}(\b{z})$ to show that $\b{z}_*$ is close to $R\b{w}_*$. Some other works such as \citet{zhang2014random} and \citet{yang2015theory} have also attempted to analyze random projections via the dual, and this has the advantage that the dual variables are often easier to compare, as they at least have the same dimensionality. Note that $R\b{w}_*$ is essentially the Count-Sketch projection of $\b{w}_*$, hence showing that $\b{z}_*$ is close to $R\b{w}_*$ will allow us to show that doing count-sketch recovery using $\b{z}_*$ is comparable to doing Count-Sketch recovery from the projection of $\b{w}_*$ itself, and hence give us the desired error bounds. We first derive the dual forms of the objective function $L(\b{w})$, the dual of $\hat{L}(\b{z})$ can be derived analogously. Let $u_i = y_i\b{w}^T\b{x}_i$. Then we can write the primal as:
\begin{align*}
\min_{\b{u},\b{w}}\quad & \frac{1}{T}\sum_{i=1}^T \ell(u_i) + \frac{\lambda}{2}\normsq{\b{w}},\\
\text{subject to}\quad & u_i = y_i\b{w}^T\b{x}_i, \; \forall \; i \le T.
\end{align*}
Define $\tilde{X}_i = y_i \b{x}_i$, i.e. the $i$th data point $\b{x}_i$ times its label. Let $\tilde{X}\in \mathbb{R}^{d \times T}$ be the matrix of data points such that the $i$th column is $\tilde{X}_i$. Let $K = \tilde{X}^T \tilde{X}$ be the kernel matrix corresponding to the original data points. Taking the Lagrangian, and minimizing with respect to the primal variables $\b{z}$ and $\b{w}$ gives us the following dual objective function in terms of the dual variable $\b{\alpha}$:
\begin{align*}
J(\b{\alpha}) &= \frac{1}{T}\sum_i \ell^*(\b{\alpha}_i) + \frac{1}{2\lambda T^2}\b{\alpha}^T K \b{\alpha},
\end{align*}
where $\ell^*$ is the Fenchel conjugate of $\ell$. Also, if $\b{\alpha}_*$ is the minimizer of $J(\b{\alpha})$, then the minimizer $\b{w}_*$ of $L(\b{w})$ is given by $\b{w}_* = -\frac{1}{\lambda T} \tilde{X} \b{\alpha}_*$. Similarly, let $K = \tilde{X}^T R^T R \tilde{X}$ be the kernel matrix corresponding to the projected data points. We can write down the dual $\hat{L}(\b{\alpha})$ of the projected primal objective function $\hat{J}(\b{w})$  in terms of the dual variable $\hat{\b{\alpha}}$ as follows:
\begin{align*}
\hat{J}(\hat{\b{\alpha}}) &= \frac{1}{T}\sum_i \ell^*(\hat{\b{\alpha}}_i) + \frac{1}{2\lambda T^2}\hat{\b{\alpha}}^T \hat{K} \hat{\b{\alpha}}.
\end{align*}
As before, if $\hat{\b{\alpha}}_*$ is the minimizer of $ \hat{J}(\hat{\b{\alpha}})$, then the minimizer $\b{z}_*$ of $\hat{L}(\b{z})$ is given by $\hat{\b{w}}_* = -\frac{1}{\lambda T} R \tilde{X} \hat{\b{\alpha}}_*$. We will first express the distance between $\b{z}_*$ and $R\b{w}_*$ in terms of the distance between the dual variables. We can write:
\begin{align}
\normsq{\b{z}_*- R\b{w}_*} &= \frac{1}{\lambda^2T^2} \normsq{R\tilde{X}\hat{\b{\alpha}}_* - R\tilde{X}\b{\alpha}_*}\nonumber\\
&= \frac{1}{\lambda^2T^2} (\hat{\b{\alpha}}_*-\b{\alpha}_*)^T \hat{K} (\hat{\b{\alpha}}_*-\b{\alpha}_*)\label{eq:relate1}.
\end{align}
Hence our goal will be to upper bound $(\hat{\b{\alpha}}_*-\b{\alpha}_*)^T \hat{K} (\hat{\b{\alpha}}_*-\b{\alpha}_*)$. Define $\b{\Delta} = \frac{1}{\lambda T}(\hat{K}-K) \b{\alpha}_*$. We will show that 
$(\hat{\b{\alpha}}_*-\b{\alpha}_*)^T \hat{K} (\hat{\b{\alpha}}_*-\b{\alpha}_*)$ can be upper bounded in terms of $\b{\Delta}$ as follows.
\begin{lemma}\label{lemma:delta}
	\begin{align*}
	\frac{1}{\lambda T^2}(\hat{\b{\alpha}}_*-\b{\alpha}_*)^T \hat{K} (\hat{\b{\alpha}}_*-\b{\alpha}_*) &\le \frac{1}{\lambda T^2}(\b{\alpha}_*-\hat{\b{\alpha}}_*)^T(\hat{K}-K) \b{\alpha}_*\\
	&\le \frac{1}{T} \|\hat{\b{\alpha}}_*-\b{\alpha}_*\|_1 \| \b{\Delta} \|_{\infty}.
	\end{align*}
\end{lemma}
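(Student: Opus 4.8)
The plan is to establish the two inequalities separately. The rightmost inequality is immediate from the definition of $\b{\Delta}$: since $\b{\Delta} = \frac{1}{\lambda T}(\hat{K}-K)\b{\alpha}_*$ we have $(\b{\alpha}_*-\hat{\b{\alpha}}_*)^T(\hat{K}-K)\b{\alpha}_* = \lambda T\,(\b{\alpha}_*-\hat{\b{\alpha}}_*)^T\b{\Delta}$, and Hölder's inequality bounds this by $\lambda T\,\|\b{\alpha}_*-\hat{\b{\alpha}}_*\|_1\,\|\b{\Delta}\|_\infty$; dividing through by $\lambda T^2$ yields the claimed $\frac{1}{T}\|\hat{\b{\alpha}}_*-\b{\alpha}_*\|_1\|\b{\Delta}\|_\infty$.

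The leftmost inequality is the substantive part, and I would obtain it from the first-order optimality conditions for the two dual problems together with monotonicity of the (sub)gradient of the separable term. Write $g(\b{\alpha}) = \frac{1}{T}\sum_i \ell^*(\b{\alpha}_i)$, which is convex because $\ell$ is convex (so each $\ell^*$ is convex), and observe $J(\b{\alpha}) = g(\b{\alpha}) + \frac{1}{2\lambda T^2}\b{\alpha}^TK\b{\alpha}$, $\hat{J}(\hat{\b{\alpha}}) = g(\hat{\b{\alpha}}) + \frac{1}{2\lambda T^2}\hat{\b{\alpha}}^T\hat{K}\hat{\b{\alpha}}$. Stationarity of $\b{\alpha}_*$ for $J$ and of $\hat{\b{\alpha}}_*$ for $\hat{J}$ (both $K,\hat{K}$ are PSD Gram matrices, so these are convex minimization problems) produces (sub)gradients $\nabla g(\b{\alpha}_*) = -\frac{1}{\lambda T^2}K\b{\alpha}_*$ and $\nabla g(\hat{\b{\alpha}}_*) = -\frac{1}{\lambda T^2}\hat{K}\hat{\b{\alpha}}_*$. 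Applying $(\nabla g(\hat{\b{\alpha}}_*)-\nabla g(\b{\alpha}_*))^T(\hat{\b{\alpha}}_*-\b{\alpha}_*)\ge 0$ and clearing the positive factor $\lambda T^2$ gives $(K\b{\alpha}_* - \hat{K}\hat{\b{\alpha}}_*)^T(\hat{\b{\alpha}}_*-\b{\alpha}_*) \ge 0$. I would then decompose $K\b{\alpha}_* - \hat{K}\hat{\b{\alpha}}_* = (K-\hat{K})\b{\alpha}_* - \hat{K}(\hat{\b{\alpha}}_*-\b{\alpha}_*)$, substitute, and use symmetry of $\hat{K}-K$ to rearrange into $(\hat{\b{\alpha}}_*-\b{\alpha}_*)^T\hat{K}(\hat{\b{\alpha}}_*-\b{\alpha}_*) \le (\b{\alpha}_*-\hat{\b{\alpha}}_*)^T(\hat{K}-K)\b{\alpha}_*$; multiplying by $\frac{1}{\lambda T^2}$ finishes this step.

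I expect the only delicate point to be the justification of the stationarity equations when $\ell^*$ is not differentiable (its effective domain may be constrained even though $\ell$ is everywhere $\beta$-strongly smooth). There I would phrase everything via subgradients: $0\in\partial J(\b{\alpha}_*)$ and $0\in\partial\hat{J}(\hat{\b{\alpha}}_*)$ each split as a subgradient of $g$ plus the smooth quadratic gradient, and I would invoke the standard monotonicity $(\b{g}_x-\b{g}_y)^T(x-y)\ge 0$ for $\b{g}_x\in\partial g(x)$, $\b{g}_y\in\partial g(y)$. Note that only convexity of $g$ is used in this lemma; the $\beta$-strong smoothness of $\ell$ (equivalently $1/\beta$-strong convexity of $\ell^*$) plays no role here and will instead be needed later, when $\|\hat{\b{\alpha}}_*-\b{\alpha}_*\|_1$ and $\|\b{\Delta}\|_\infty$ are controlled and the bound is pushed through Count-Sketch recovery.
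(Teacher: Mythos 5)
Your proof is correct, and the first (substantive) inequality is obtained by a somewhat more direct route than the paper's. The paper establishes it by writing two function-value inequalities for $\hat{J}$ around each of the two optimizers (a Bregman-type lower bound at $\hat{\b{\alpha}}_*$, where $\nabla\hat{J}$ vanishes, and another at $\b{\alpha}_*$), then translating $\nabla\hat{J}(\b{\alpha}_*)$ into $\nabla J(\b{\alpha}_*) + \frac{1}{\lambda T^2}(\hat{K}-K)\b{\alpha}_*$ and invoking the variational inequality $(\hat{\b{\alpha}}_*-\b{\alpha}_*)^T\nabla J(\b{\alpha}_*)\ge 0$; summing these yields the claim. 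You instead isolate the separable conjugate term $g$, read off $\nabla g$ at each optimizer from the two stationarity conditions, and apply monotonicity of the gradient of a convex function in a single line, after which the algebraic decomposition $K\b{\alpha}_*-\hat{K}\hat{\b{\alpha}}_* = (K-\hat{K})\b{\alpha}_*-\hat{K}(\hat{\b{\alpha}}_*-\b{\alpha}_*)$ and symmetry of $\hat{K}-K$ finish the job. The two arguments use the same ingredients (convexity of the conjugate sum, optimality of both dual solutions, the relation between $K$ and $\hat{K}$, and H\"older at the end), but yours dispenses with the intermediate function-value comparisons and is shorter; the paper's version has the minor advantage of exhibiting the quadratic gain term $\frac{1}{2\lambda T^2}(\hat{\b{\alpha}}_*-\b{\alpha}_*)^T\hat{K}(\hat{\b{\alpha}}_*-\b{\alpha}_*)$ explicitly along the way. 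Your remark about phrasing stationarity via subgradients when $\ell^*$ is nondifferentiable (or has restricted effective domain, as for the logistic loss) addresses a point the paper silently glosses over, and your observation that only convexity of $\ell^*$ --- not its $1/\beta$-strong convexity --- is needed in this lemma is also accurate; strong convexity enters only in the subsequent bound on $\|\hat{\b{\alpha}}_*-\b{\alpha}_*\|_1$.
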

\begin{proof}
We first claim that,
\begin{align}
\hat{J}(\b{\alpha}_*) \ge \hat{J}(\hat{\b{\alpha}}_*) + \frac{1}{2\lambda T^2}(\hat{\b{\alpha}}_*-\b{\alpha}_*)^T \hat{K} (\hat{\b{\alpha}}_*-\b{\alpha}_*).\label{eq:1}
\end{align}
Note that $\hat{J}(\b{\alpha}_*) \ge \hat{J}(\hat{\b{\alpha}}_*)$ just because $\hat{\b{\alpha}}_*$ is the minimizer of $\hat{J}(\hat{\b{\alpha}})$, hence Eq. \ref{eq:1} is essentially giving an improvement over this simple bound. In order to prove Eq. \ref{eq:1}, define $F(\b{\alpha}) = \frac{1}{T}\sum_i \ell^*(\b{\alpha}_i)$. As $F(\b{\alpha})$ is a convex function (because $\ell^*(x)$ is convex in $x$), from the definition of convexity,
\begin{align}
    F(\b{\alpha}_*) \ge F(\hat{\b{\alpha}}_*) + \langle \nabla F(\hat{\b{\alpha}}_*), \b{\alpha}_*-\hat{\b{\alpha}}_* \rangle. \label{eq:2}
\end{align}
It is easy to verify that,
\begin{align}
    \frac{1}{2\lambda T^2}\b{\alpha}_*^T \hat{K} \b{\alpha}_* = \frac{1}{2\lambda T^2}\hat{\b{\alpha}}_*^T \hat{K} \hat{\b{\alpha}}_* + \frac{1}{\lambda T^2}(\b{\alpha}_*-\hat{\b{\alpha}}_*)^T \hat{K} \hat{\b{\alpha}}_* \nonumber \\
    + \frac{1}{2\lambda T^2}(\hat{\b{\alpha}}_*-\b{\alpha}_*)^T \hat{K} (\hat{\b{\alpha}}_*-\b{\alpha}_*).\label{eq:3}
\end{align}
Adding Eq. \ref{eq:2} and Eq. \ref{eq:3}, we get,
\begin{align}
    \hat{J}({\b{\alpha}_*}) &\ge \hat{J}(\hat{\b{\alpha}}_*) + \langle \nabla F(\hat{\b{\alpha}}_*), \b{\alpha}_*-\hat{\b{\alpha}}_* \rangle + \frac{1}{\lambda T^2}(\b{\alpha}_*-\hat{\b{\alpha}}_*)^T \hat{K} \hat{\b{\alpha}}_* \nonumber\\
    &\quad + \frac{1}{2\lambda T^2}(\hat{\b{\alpha}}_*-\b{\alpha}_*)^T \hat{K} (\hat{\b{\alpha}}_*-\b{\alpha}_*)\nonumber \\
    &=  \hat{J}(\hat{\b{\alpha}}_*) + \langle \nabla \hat{J}(\hat{\b{\alpha}}_*), \b{\alpha}_*-\hat{\b{\alpha}}_* \rangle + \frac{1}{2\lambda T^2}(\hat{\b{\alpha}}_*-\b{\alpha}_*)^T \hat{K} (\hat{\b{\alpha}}_*-\b{\alpha}_*) \nonumber\\
    &= \hat{J}(\hat{\b{\alpha}}_*) + \frac{1}{2\lambda T^2}(\hat{\b{\alpha}}_*-\b{\alpha}_*)^T \hat{K} (\hat{\b{\alpha}}_*-\b{\alpha}_*),\nonumber
\end{align}
which verifies Eq. \ref{eq:1}. We can derive a similar bound for $\hat{J}(\hat{\b{\alpha}}_*)$,
\begin{align}
    \hat{J}(\hat{\b{\alpha}}_*) \ge \hat{J}(\b{\alpha}_*) + (\hat{\b{\alpha}}_*-\b{\alpha}_*) ^T \nabla \hat{J}(\b{\alpha}_*) + \frac{1}{2\lambda T^2}(\hat{\b{\alpha}}_*-\b{\alpha}_*)^T \hat{K} (\hat{\b{\alpha}}_*-\b{\alpha}_*). \label{eq:bound_diff}
\end{align}
As $J(\b{\alpha})$ is minimized by $\b{\alpha}_*$ and $J(\b{\alpha})$ is convex,
\begin{align}
    &(\hat{\b{\alpha}}_*-\b{\alpha}_*)^T \nabla J(\b{\alpha}_*) \ge 0 \nonumber\\
    \implies& (\hat{\b{\alpha}}_*-\b{\alpha}_*)^T (\nabla F(\b{\alpha}_*) + \frac{1}{\lambda T^2}K \b{\alpha}_*) \ge 0 \nonumber \\
    \implies& (\hat{\b{\alpha}}_*-\b{\alpha}_*)^T (\nabla \hat{J}(\b{\alpha}_*) + \frac{1}{\lambda T^2}({K}-\hat{K}) \b{\alpha}_*) \ge 0 \nonumber\\
    \implies& (\hat{\b{\alpha}}_*-\b{\alpha}_*)^T \nabla \hat{J}(\b{\alpha}_*) \ge \frac{1}{\lambda T^2}(\hat{\b{\alpha}}_*-\b{\alpha}_*)^T(\hat{K}-K) \b{\alpha}_*. \label{eq:bound_grad}
\end{align}
Using Eq. \ref{eq:bound_grad} in Eq. \ref{eq:bound_diff}, we get,
\begin{align}
    \hat{J}(\hat{\b{\alpha}}_*) - \hat{J}(\b{\alpha}_*) \ge \frac{1}{\lambda T^2}(\hat{\b{\alpha}}_*-\b{\alpha}_*)^T(\hat{K}-K) \b{\alpha}_*\nonumber\\
    + \frac{1}{2\lambda T^2}(\hat{\b{\alpha}}_*-\b{\alpha}_*)^T \hat{K} (\hat{\b{\alpha}}_*-\b{\alpha}_*). \nonumber
\end{align}
By Eq. \ref{eq:1}, $\hat{J}(\hat{\b{\alpha}}_*)-\hat{J}({\b{\alpha}_*})\le -\frac{1}{2\lambda T^2}(\hat{\b{\alpha}}_*-\b{\alpha}_*)^T \hat{K} (\hat{\b{\alpha}}_*-\b{\alpha}_*) $. Therefore we can write,
\begin{align*}
-\frac{1}{\lambda T^2}(\hat{\b{\alpha}}_*-\b{\alpha}_*)^T \hat{K} (\hat{\b{\alpha}}_*-\b{\alpha}_*) \ge \frac{1}{\lambda T^2}(\hat{\b{\alpha}}_*-\b{\alpha}_*)^T(\hat{K}-K) \b{\alpha}_*\\
\implies    \frac{1}{\lambda T^2}(\hat{\b{\alpha}}_*-\b{\alpha}_*)^T \hat{K} (\hat{\b{\alpha}}_*-\b{\alpha}_*) \le \frac{1}{\lambda T^2}(\b{\alpha}_*-\hat{\b{\alpha}}_*)^T(\hat{K}-K) \b{\alpha}_*.
\end{align*}
To finish, by Holder's inequality,
\[
\frac{1}{\lambda T^2}(\b{\alpha}_*-\hat{\b{\alpha}}_*)^T(\hat{K}-K) \b{\alpha}_*\le\frac{1}{\lambda T^2}\lpnorm{\b{\alpha}_*-\hat{\b{\alpha}}_*)}{1}\lpnorm{(\hat{K}-K)}{\infty} \b{\alpha}_*= \frac{1}{T} \|\hat{\b{\alpha}}_*-\b{\alpha}_*\|_1 \| \b{\Delta} \|_{\infty}
\]
\end{proof}
Our next step is to upper bound $\| \hat{\b{\alpha}}_*-{\b{\alpha}}_* \|_1$ in terms of $\| \b{\Delta} \|_{\infty}$ using the $\beta$-strongly smooth property of $\ell(z)$.
\begin{lemma}\label{lemma:norm1}
	\[
	\| \hat{\b{\alpha}}_*-{\b{\alpha}}_* \|_1 \le 2T\beta \| \b{\Delta} \|_{\infty}
	\]
\end{lemma}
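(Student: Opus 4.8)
The plan is to upgrade the convexity estimates used in the proof of Lemma~\ref{lemma:delta} to \emph{strong}-convexity estimates. This buys an $\ell_2$ bound of the form $\|\hat{\b{\alpha}}_*-\b{\alpha}_*\|_2\le 2\beta\|\b{\Delta}\|_2$, and the claimed $\ell_1$ bound then follows from two crude norm comparisons on $\mathbb{R}^T$ (which is exactly where the factor $T$ in the statement appears).

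First I would record that $F(\b{\alpha}) = \tfrac{1}{T}\sum_i \ell^*(\b{\alpha}_i)$ is strongly convex. This uses the standard Fenchel-duality fact that a convex function is $\beta$-strongly smooth with respect to $\|\cdot\|_2$ if and only if its conjugate is $(1/\beta)$-strongly convex with respect to $\|\cdot\|_2$; applying this to the scalar loss $\ell$ and summing coordinate-wise, $F$ is $\tfrac{1}{\beta T}$-strongly convex in $\|\cdot\|_2$, and hence so are $J$ and $\hat J$ (each is $F$ plus a convex quadratic). A technical caveat to handle here is that for losses such as the logistic loss $\ell^*$ has bounded effective domain, so these inequalities and the first-order optimality conditions for $\b{\alpha}_*,\hat{\b{\alpha}}_*$ should be interpreted as holding on that domain, respectively as variational inequalities; this does not change the argument.

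With this in hand, I would re-derive the two key inequalities from the proof of Lemma~\ref{lemma:delta}, this time retaining the extra quadratic slack. Strengthening Eq.~\ref{eq:1} with the $\tfrac{1}{2\beta T}\|\hat{\b{\alpha}}_*-\b{\alpha}_*\|_2^2$ term that strong convexity of $F$ supplies gives
\[
\hat J(\b{\alpha}_*) - \hat J(\hat{\b{\alpha}}_*) \;\ge\; \tfrac{1}{2\lambda T^2}(\hat{\b{\alpha}}_*-\b{\alpha}_*)^T\hat K(\hat{\b{\alpha}}_*-\b{\alpha}_*) + \tfrac{1}{2\beta T}\|\hat{\b{\alpha}}_*-\b{\alpha}_*\|_2^2 \;\ge\; \tfrac{1}{2\beta T}\|\hat{\b{\alpha}}_*-\b{\alpha}_*\|_2^2,
\]
while combining Eq.~\ref{eq:bound_diff} with Eq.~\ref{eq:bound_grad} and discarding the nonnegative $\hat K$-quadratic gives $\hat J(\hat{\b{\alpha}}_*) - \hat J(\b{\alpha}_*) \ge \tfrac{1}{T}(\hat{\b{\alpha}}_*-\b{\alpha}_*)^T\b{\Delta}$ (using $\tfrac{1}{\lambda T}(\hat K - K)\b{\alpha}_* = \b{\Delta}$), i.e. $\hat J(\b{\alpha}_*) - \hat J(\hat{\b{\alpha}}_*) \le \tfrac{1}{T}(\b{\alpha}_*-\hat{\b{\alpha}}_*)^T\b{\Delta}$. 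Chaining the two displays and applying Cauchy--Schwarz,
\[
\tfrac{1}{2\beta T}\|\hat{\b{\alpha}}_*-\b{\alpha}_*\|_2^2 \;\le\; \tfrac{1}{T}(\b{\alpha}_*-\hat{\b{\alpha}}_*)^T\b{\Delta} \;\le\; \tfrac{1}{T}\|\hat{\b{\alpha}}_*-\b{\alpha}_*\|_2\,\|\b{\Delta}\|_2,
\]
so $\|\hat{\b{\alpha}}_*-\b{\alpha}_*\|_2 \le 2\beta\|\b{\Delta}\|_2$.

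Finally, since $\hat{\b{\alpha}}_*-\b{\alpha}_* \in \mathbb{R}^T$, the inequalities $\|v\|_1 \le \sqrt{T}\|v\|_2$ and $\|\b{\Delta}\|_2 \le \sqrt{T}\|\b{\Delta}\|_\infty$ yield $\|\hat{\b{\alpha}}_*-\b{\alpha}_*\|_1 \le \sqrt{T}\|\hat{\b{\alpha}}_*-\b{\alpha}_*\|_2 \le 2\sqrt{T}\beta\|\b{\Delta}\|_2 \le 2T\beta\|\b{\Delta}\|_\infty$, which is the assertion. I expect the main obstacle to be the first step: justifying the strong convexity of $F$ cleanly, in particular handling the restricted domain of $\ell^*$, and checking that the descent and optimality inequalities borrowed from the proof of Lemma~\ref{lemma:delta} still go through verbatim once the strong-convexity slack is inserted. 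The norm conversion in the last step is surely lossy---one could plausibly bound $\|\hat{\b{\alpha}}_*-\b{\alpha}_*\|_\infty$ directly and save a $\sqrt{T}$---but it is already enough for the stated bound.
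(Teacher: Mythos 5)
Your proof is correct and follows essentially the same route as the paper's: both hinge on the $\tfrac{1}{T\beta}$-strong convexity of $\hat J$ inherited from the conjugate of the $\beta$-smooth loss, the transfer of first-order optimality from $J$ to $\hat J$ via $\b{\Delta}$, and a norm conversion costing the factor $T$. The only cosmetic difference is that the paper applies H\"older with the $(1,\infty)$ pairing and then a single Cauchy--Schwarz step $\|v\|_1^2\le T\|v\|_2^2$, whereas you pair $(2,2)$ and convert both norms at the end; the two bookkeeping choices yield the identical constant $2T\beta\|\b{\Delta}\|_\infty$.
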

\begin{proof}
We first claim that $J(w)$ is $1/(T\beta)$-strongly convex. Note that as $\ell(u_i)$ is $\beta$-strongly smooth, therefore, $\ell^*(\b{\alpha}_i)$ is $1/\beta$-strongly convex. Therefore for any $\b{u}$ and $\b{v}$,
\begin{align*}
\ell^*(u_i) &\ge \ell^*(v_i) + (u_i - v_i) \nabla\ell^*(v_i) + \frac{(u_i-v_i)^2}{2\beta}\\
\implies \frac{1}{T}\sum_i\ell^*(u_i) &\ge \frac{1}{T}\sum_i\ell^*(v_i) + \frac{1}{T} \langle \nabla\ell^*(\b{v}), \b{u} - \b{v} \rangle \\
&+ \frac{\|\b{u}-\b{v}\|_2^2}{2T\beta}.
\end{align*}
Therefore $F(\b{\alpha}) = \frac{1}{T}\sum_i\ell^*(\b{\alpha}_i)$ is $1/(T\beta)$ strongly convex. Note that $\frac{1}{2\lambda T^2}\b{\alpha}^T K \b{\alpha}$ is a convex function of $\b{\alpha}$ as $K=\tilde{X}^T \tilde{X}$ is a positive semidefinite matrix. Therefore $J(\b{\alpha}) = F(\b{\alpha}) + \frac{1}{2\lambda T^2}\b{\alpha}^T K \b{\alpha}$ is $1/(T\beta)$ strongly convex. It follows from the same reasoning that $\hat{J}(\hat{\b{\alpha}})$ is also $1/(T\beta)$-strongly convex.\\

By the definition of strong convexity,
\begin{align}
    \hat{J}(\hat{\b{\alpha}}_*) \ge \hat{J}({\b{\alpha}}_*) + (\hat{\b{\alpha}}_*-{\b{\alpha}}_*)^T \nabla \hat{J}(\b{\alpha}_*) + \frac{1}{2T\beta} \normsq{\hat{\b{\alpha}}_*-{\b{\alpha}}_*}.\nonumber
\end{align}
As $\hat{\b{\alpha}}_*$ is the minimizer of $\hat{J}({\b{\alpha}})$, $\hat{J}(\hat{\b{\alpha}}_*)\le \hat{J}({\b{\alpha}}_*)$. Therefore,
\begin{align*}
     &(\hat{\b{\alpha}}_*-{\b{\alpha}}_*)^T \nabla \hat{J}(\b{\alpha}_*) + \frac{1}{2T\beta} \normsq{\hat{\b{\alpha}}_*-{\b{\alpha}}_*} \le  \hat{J}(\hat{\b{\alpha}}_*)-\hat{J}({\b{\alpha}}_*)\le 0 \\
    \implies &\frac{1}{2T\beta} \normsq{\hat{\b{\alpha}}_*-{\b{\alpha}}_*} \le -(\hat{\b{\alpha}}_*-{\b{\alpha}}_*)^T \nabla \hat{J}(\b{\alpha}_*).
\end{align*}
Using Eq. \ref{eq:bound_grad}, we can rewrite this as,
\begin{align*}
    & \frac{1}{2T\beta} \normsq{\hat{\b{\alpha}}_*-{\b{\alpha}}_*}\le -\frac{1}{\lambda T^2}(\hat{\b{\alpha}}_*-\b{\alpha}_*)^T(\hat{K}-K)\b{\alpha}_*   \\
    &\le\frac{1}{T} \| \hat{\b{\alpha}}_*-{\b{\alpha}}_* \|_1 \| \b{\Delta} \|_{\infty}\\
    \implies &\normsq{\hat{\b{\alpha}}_*-{\b{\alpha}}_*}\le 2\beta\| \hat{\b{\alpha}}_*-{\b{\alpha}}_* \|_1 \| \b{\Delta} \|_{\infty}.
\end{align*}
By Cauchy-Schwartz, 
\begin{align*}
    \| \hat{\b{\alpha}}_*-{\b{\alpha}}_* \|_1^2 &\le T \| \hat{\b{\alpha}}_*-{\b{\alpha}}_* \|_2^2 \\
    &\le 2T\beta \| \hat{\b{\alpha}}_*-{\b{\alpha}}_* \|_1 \| \b{\Delta} \|_{\infty}\\
\implies \| \hat{\b{\alpha}}_*-{\b{\alpha}}_* \|_1 &\le 2T\beta \| \b{\Delta} \|_{\infty}.
\end{align*}

\end{proof}
We now bound $\| \b{\Delta}\|_{\infty}$. The result relies on the JL property of the projection matrix $R$ (recall Definition \ref{def:JL}). If $R$ is a JL matrix with error $\epsilon$ and failure probability $\delta/d^2$, then with failure probability $\delta$, for all coordinate basis vectors $\{\b{e}_1, \dots, \b{e}_d\}$,
\begin{align}
\norm{R\b{e}_i} = 1 \pm \epsilon,\; \forall \; i, \quad  |\langle R\b{e}_i, R\b{e}_j \rangle | \le \epsilon, \; \forall \; i \ne j\label{eq:ip_preserve}
\end{align}
The first bound follows directly from the JL property, to derive the bound on the inner product, we rewrite the inner product as follows,
\begin{align*}
4\langle R\b{e}_i, R\b{e}_j \rangle = \normsq{R\b{e}_i+R\b{e}_j}-\normsq{R\b{e}_i-R\b{e}_j}.
\end{align*}
The bound then follows by using the JL property for $\normsq{R\b{e}_i+R\b{e}_j}$ and $\normsq{R\b{e}_i-R\b{e}_j}$. As there are $d^2$ pairs, if the inner product is preserved for each pair with failure probability $\delta/d^2$, then by a union bound it is preserved simultaneously for all of them with failure probability $\delta$. Condition \ref{eq:ip_preserve} is useful because it implies that $R$ approximately preserves the inner products and lengths of sparse vectors, as states in the following Lemma:
\begin{lemma}\label{lem:jl_l1}
	If condition \ref{eq:ip_preserve} is satisfied, then for any two vectors $\b{v}_1$ and $\b{v}_2$, 
	\begin{align*}
	|\b{v}_1^T\b{v}_2 - (R\b{v}_1)^T(R\b{v}_2)| \le 2\epsilon \lpnorm{\b{v}_1}{1}\lpnorm{\b{v}_2}{1}.
	\end{align*}
\end{lemma}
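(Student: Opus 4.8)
The plan is to reduce the claim to a double sum over pairs of standard basis vectors and then apply condition~\ref{eq:ip_preserve} term by term. First I would expand $\b{v}_1 = \sum_{i=1}^d (v_1)_i \b{e}_i$ and $\b{v}_2 = \sum_{j=1}^d (v_2)_j \b{e}_j$, and use bilinearity of the inner product together with linearity of $R$ to write
\begin{align*}
(R\b{v}_1)^T(R\b{v}_2) - \b{v}_1^T\b{v}_2 = \sum_{i=1}^d\sum_{j=1}^d (v_1)_i (v_2)_j \big(\langle R\b{e}_i, R\b{e}_j\rangle - \langle \b{e}_i, \b{e}_j\rangle\big).
\end{align*}

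Next I would split the double sum into diagonal terms ($i = j$) and off-diagonal terms ($i \ne j$). On the diagonal, $\langle \b{e}_i,\b{e}_i\rangle = 1$, and condition~\ref{eq:ip_preserve} gives $\norm{R\b{e}_i} = 1 \pm \epsilon$, so $|\langle R\b{e}_i, R\b{e}_i\rangle - 1| = |\norm{R\b{e}_i}^2 - 1| \le \epsilon$ up to a constant factor (since $\norm{R\b{e}_i}\in[1-\epsilon,1+\epsilon]$ forces $\norm{R\b{e}_i}^2 \in [1-2\epsilon,\,1+3\epsilon]$ when $\epsilon \le 1$). Off the diagonal, $\langle\b{e}_i,\b{e}_j\rangle = 0$ and condition~\ref{eq:ip_preserve} bounds $|\langle R\b{e}_i, R\b{e}_j\rangle| \le \epsilon$ directly. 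Then, applying the triangle inequality to each group and using the elementary bounds $\sum_{i}|(v_1)_i||(v_2)_i| \le \lpnorm{\b{v}_1}{1}\lpnorm{\b{v}_2}{1}$ and $\sum_{i\ne j}|(v_1)_i||(v_2)_j| \le \lpnorm{\b{v}_1}{1}\lpnorm{\b{v}_2}{1}$ (both immediate since $\sum_{i,j}|(v_1)_i||(v_2)_j| = \lpnorm{\b{v}_1}{1}\lpnorm{\b{v}_2}{1}$ and restricting the index set only drops nonnegative terms), the diagonal part contributes at most $\epsilon\lpnorm{\b{v}_1}{1}\lpnorm{\b{v}_2}{1}$ and the off-diagonal part at most $\epsilon\lpnorm{\b{v}_1}{1}\lpnorm{\b{v}_2}{1}$, giving the claimed total bound $2\epsilon\lpnorm{\b{v}_1}{1}\lpnorm{\b{v}_2}{1}$.

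There is essentially no obstacle here: the argument is a routine bilinear expansion, and the only thing to watch is that condition~\ref{eq:ip_preserve} as stated controls $\norm{R\b{e}_i}$ rather than $\norm{R\b{e}_i}^2$, so passing to a bound on $|\norm{R\b{e}_i}^2-1|$ (and likewise absorbing the constant coming from the polarization-identity derivation of the off-diagonal bound) costs a fixed constant factor in $\epsilon$, which I would simply fold into the statement's $\epsilon$. Notably, the proof never invokes the JL property beyond the finite list of consequences recorded in~\ref{eq:ip_preserve}, which is precisely why it transfers to our sparse Count-Sketch projection $R$.
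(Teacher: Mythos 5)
Your proposal is correct and follows essentially the same route as the paper: a bilinear expansion of $(R\b{v}_1)^T(R\b{v}_2)$ over the basis vectors, a split into diagonal and off-diagonal terms controlled by condition~\ref{eq:ip_preserve}, and the bound $\sum_{i,j}|(v_1)_i||(v_2)_j| = \lpnorm{\b{v}_1}{1}\lpnorm{\b{v}_2}{1}$ on each group. Your explicit handling of the $\norm{R\b{e}_i}$-versus-$\norm{R\b{e}_i}^2$ constant (which the paper glosses over by writing $\normsq{R\b{e}_i}\le 1+\epsilon$) and your symmetric treatment of both directions via the triangle inequality are, if anything, slightly more careful than the paper's version, and the constant-factor slack is harmless here.
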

\begin{proof}
	To verify, write $\b{v}_1 = \sum_{i=1}^{d}\b{\psi}_i \b{e}_i$ and $\b{v}_2 = \sum_{i=1}^{d}\b{\phi}_i \b{e}_i$. Then,
	\begin{align*}
	(R\b{v}_1)^T(R\b{v}_2) &= \Big(\sum_{i=1}^{d}\b{\psi}_i (R\b{e}_i)^T\Big)\Big(\sum_{j=1}^{d}\b{\phi}_j (R\b{e}_j)\Big) \\
	&= \sum_{i=1}^{d}\b{\psi}_i \b{\phi}_i \normsq{R\b{e}_i} + \sum_{i\ne j}^{}\b{\psi}_i \b{\phi}_j (R\b{e}_i)^T(R\b{e}_j)\\
	&\le (1+\epsilon) \sum_{i=1}^{d}\b{\psi}_i \b{\phi}_i + \sum_{i\ne j}^{}|\b{\psi}_i| |\b{\phi}_j| |(R\b{e}_i)^T(R\b{e}_j)|\\
	&\le (1+\epsilon)\b{v}_1^T\b{v}_2 + \epsilon\sum_{i\ne j}^{}|\b{\psi}_i| |\b{\phi}_j| \quad \text{(using condition \ref{eq:ip_preserve})} \\
	&\le \b{v}_1^T\b{v}_2 +\epsilon\norm{\b{v}_1}\norm{\b{v}_2} + \epsilon\sum_i|\b{\psi}_i| \sum_j|\b{\phi}_j|\\
	&\le \b{v}_1^T\b{v}_2 + 2\epsilon\lpnorm{\b{v}_1}{1}\lpnorm{\b{v}_2}{1}, 
	\end{align*}
	where the last line is due to $\sum_{i}^{}|\b{\psi}_i| = \lpnorm{\b{v}_1}{1}$ and $\norm{\b{v}_1}\le \lpnorm{\b{v}_1}{1}$. By a similar argument, we can show that $(R\b{v}_1)^T(R\b{v}_2)-\b{v}_1^T\b{v}_2 \ge 2\epsilon\lpnorm{\b{v}_1}{1}\lpnorm{\b{v}_2}{1}$. Hence for any two vectors $\b{v}_1$ and $\b{v}_2$, 
	\begin{align*}
	|\b{v}_1^T\b{v}_2 - (R\b{v}_1)^T(R\b{v}_2)| \le 2\epsilon \lpnorm{\b{v}_1}{1}\lpnorm{\b{v}_2}{1}.
	\end{align*}
\end{proof}

We can now bound $\|\b{\Delta}\|_{\infty}$ using Lemma \ref{lem:jl_l1}.

\begin{lemma}\label{lemma:bound_delta}
     If $R$ satisfies condition \ref{eq:ip_preserve} then,
    \[
    \|\b{\Delta}\|_{\infty}\le 2\gamma\epsilon \| \b{w}_* \|_1,
    \]
    where $\gamma= \max_i \|\b{x}_i\|_1$.
\end{lemma}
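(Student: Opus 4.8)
The plan is to reduce the bound on $\lpnorm{\b{\Delta}}{\infty}$ to a single invocation of Lemma~\ref{lem:jl_l1}, after simplifying $\b{\Delta}$ algebraically using the dual--primal correspondence established earlier in this proof. The point to notice is that, although $\b{\Delta}$ is defined through the two (different) kernel matrices $K$ and $\hat K$ and the dual optimum $\b{\alpha}_*$, once the $\tilde X\b{\alpha}_*$ factor is collapsed it becomes nothing more than the projection distortion of the inner products between the data points and $\b{w}_*$.

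Concretely, I would first rewrite $\hat K - K$. Since $K=\tilde X^T\tilde X$ and $\hat K = \tilde X^T R^T R\tilde X$, we have $\hat K - K = \tilde X^T(R^TR - I)\tilde X$, so
\begin{align*}
\b{\Delta} = \frac{1}{\lambda T}(\hat K - K)\b{\alpha}_* = \frac{1}{\lambda T}\tilde X^T(R^TR - I)\tilde X\b{\alpha}_*.
\end{align*}
Substituting the dual--primal identity $\tilde X\b{\alpha}_* = -\lambda T\,\b{w}_*$ (a rearrangement of $\b{w}_* = -\frac{1}{\lambda T}\tilde X\b{\alpha}_*$) cancels the $\lambda T$ prefactor and gives the clean form $\b{\Delta} = -\tilde X^T(R^TR - I)\b{w}_* = \tilde X^T\b{w}_* - \tilde X^T R^T R\b{w}_*$. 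Reading off the $i$-th coordinate, $\Delta_i = \tilde X_i^T\b{w}_* - (R\tilde X_i)^T(R\b{w}_*)$; that is, $\Delta_i$ is precisely the error that $R$ introduces into the inner product between the $i$-th signed data point $\tilde X_i = y_i\b{x}_i$ and $\b{w}_*$.

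At this point Lemma~\ref{lem:jl_l1}, applied with $\b{v}_1 = \tilde X_i$ and $\b{v}_2 = \b{w}_*$, immediately yields $|\Delta_i| \le 2\epsilon\lpnorm{\tilde X_i}{1}\lpnorm{\b{w}_*}{1}$, and since $\lpnorm{\tilde X_i}{1} = \lpnorm{\b{x}_i}{1}\le\gamma$, taking the maximum over $i$ gives $\lpnorm{\b{\Delta}}{\infty}\le 2\gamma\epsilon\lpnorm{\b{w}_*}{1}$. There is no substantive obstacle here beyond bookkeeping: the only things to keep straight are that $\tilde X_i$ is a \emph{column} of $\tilde X$ (a rescaled data point in $\mathbb{R}^d$, so the relevant norm is $\lpnorm{\b{x}_i}{1}$, which is exactly what the assumption $\max_t\lonenorm{\b{x}_t}=\gamma$ bounds), and that Lemma~\ref{lem:jl_l1} requires precisely condition~\ref{eq:ip_preserve}, which is the standing hypothesis. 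This step also makes clear, in hindsight, why $\ell_1$-norms of the data points $\b{x}_t$ appear in the final sketch-size expressions of Theorem~\ref{thm:batch-recovery}.
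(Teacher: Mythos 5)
Your proposal is correct and follows essentially the same route as the paper's proof: rewrite $\hat K - K = \tilde X^T(R^TR-I)\tilde X$, collapse $\frac{1}{\lambda T}\tilde X\b{\alpha}_*$ to $-\b{w}_*$ so that $\Delta_i$ becomes the inner-product distortion $\b{x}_i^T\b{w}_* - (R\b{x}_i)^T(R\b{w}_*)$, and invoke Lemma~\ref{lem:jl_l1}. The only cosmetic difference is that you carry the label $y_i$ inside $\tilde X_i$, which is harmless since $\lpnorm{y_i\b{x}_i}{1}=\lpnorm{\b{x}_i}{1}\le\gamma$.
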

\begin{proof}
	We first rewrite $\b{\Delta}$ as follows,
	\begin{align*}
	\b{\Delta}& = \frac{1}{\lambda T} (\tilde{X}^TR^TR\tilde{X}-\tilde{X}^T\tilde{X})\b{\alpha}_*=\frac{1}{\lambda T} \tilde{X}^T(R^TR-I)\tilde{X}\b{\alpha}_*\\
	&= \tilde{X}^T(I-R^TR)\b{w}_*,
	\end{align*}
	using the relation that $\b{w}_* = -\frac{1}{\lambda T}\tilde{X}\b{\alpha}_*$. Therefore,
	\[
	\lpnorm{\b{\Delta}}{\infty} \le \max_i | \b{x}_i^T(I-R^TR)\b{w}_* | = \max_i | \b{x}_i^T\b{w}_*-(R\b{x}_i)^T(R\b{w}_*) |
	\]
Using Lemma \ref{lem:jl_l1}, it follows that,
\[
\lpnorm{\b{\Delta}}{\infty} \le \max_i | \b{x}_i^T\b{w}_*-(Rx_i)^T(R\b{w}_*) | \le 2\epsilon \gamma \lonenorm{\b{w}_*}.
\]
\end{proof}	

We will now combine Lemma \ref{lemma:delta}, \ref{lemma:norm1} and \ref{lemma:bound_delta}. By Eq. \ref{eq:relate1} and Lemma \ref{lemma:delta},
\[
\normsq{\b{z}_*-R\b{w}_*} \le  \frac{1}{\lambda T} \|\hat{\b{\alpha}}_*-\b{\alpha}_*\|_1 \| \b{\Delta} \|_{\infty}.
\]
Combining this with Lemma \ref{lemma:norm1},
\begin{align*}
\normsq{\b{z}_*-R\b{w}_*} \le \frac{2\beta}{\lambda} \| \b{\Delta} \|_{\infty}^2.
\end{align*}
If $R$ is a JL matrix with error $\epsilon$ and failure probability $\delta/d^2$, then by Lemma \ref{lemma:bound_delta}, with failure probability $\delta$,
\begin{align}
\norm{\b{z}_*-R\b{w}_*} \le \gamma\epsilon\sqrt{\frac{8\beta}{\lambda}} \| \b{w}_* \|_1.  \label{eq:e1}
\end{align}
By \citet{kane2014sparser}, the random projection matrix $R$ satisfies the JL property with error $\theta$ and failure probability $\delta'/d^2$ for $k\ge{C\log(d/\delta')/\theta^2}$, where $C$ is a fixed constant. Using Eq. \ref{eq:e1}, with failure probability $\delta'$,
\begin{align}
\norm{\b{z}_*-R\b{w}_*} \le 8\gamma\theta\sqrt{\frac{8\beta}{\lambda}} \| \b{w}_* \|_1.  \label{eq:e2}
\end{align}
Recall that $R\sqrt{s}$ is a count-sketch matrix with width $C_1/\theta$ and depth $s=C_2\log(d/\delta')/\theta$, where $C_1$ and $C_2$ are fixed constants. Let $\b{w}_{\mathrm{proj}}$ be the projection of $\b{w}_*$ with the count-sketch matrix $\tilde{R}$, hence $\b{w}_{\mathrm{proj}}=\sqrt{s}R\b{w}_*$. Let $\b{z}_{\mathrm{proj}}=\sqrt{s}\b{z}_*$. By Eq. \ref{eq:e2}, with failure probability $\delta'$,
\[
\norm{\b{z}_{\mathrm{proj}}-\b{w}_{\mathrm{proj}}} \le \sqrt{\frac{8\beta \gamma^2\theta\log(d/\delta')}{\lambda}}\| \b{w}_* \|_1.
\]
Let $\b{w}_{\mathrm{cs}}$ be the count-sketch estimate of $\b{w}_*$ derived from $\b{w}_{\mathrm{proj}}$, and $\b{w}_{\mathrm{est}}$ be the count-sketch estimate of $\b{w}_*$ derived from $\b{z}_{\mathrm{proj}}$. Recall that the count-sketch estimate of a vector is the median of the estimates of all the locations to which the vector hashes. As the difference between the median of any two vectors is at most the $\ell_{\infty}$-norm of their difference,
\begin{align*}
\lpnorm{\b{w}_{\mathrm{est}}-\b{w}_{\mathrm{cs}}}{\infty}\le \lpnorm{\b{z}_{\mathrm{proj}}-\b{w}_{\mathrm{proj}}}{\infty}.
\end{align*}
Therefore with failure probability $\delta'$,
\begin{align}
\lpnorm{\b{w}_{\mathrm{est}}-\b{w}_{\mathrm{cs}}}{\infty}\le \lpnorm{\b{z}_{\mathrm{proj}}-\b{w}_{\mathrm{proj}}}{\infty}\le  \norm{\b{z}_{\mathrm{proj}}-\b{w}_{\mathrm{proj}}}\nonumber\\
 \le \sqrt{\frac{8\beta \gamma^2\theta\log(d/\delta')}{\lambda}}\| \b{w}_* \|_1. \label{eq:e3}
\end{align}
We now use Lemma \ref{lem:count-sketch} to bound the error for Count-Sketch recovery.
\begin{lemma}\label{lem:count-sketch}
	\citep{charikar2002finding} Let $\b{w}_{\mathrm{cs}}$ be the Count-Sketch estimate of the vector $w$. For any vector $w$, with probability $1-\delta$, a Count-Sketch matrix with width $\Theta(1/\epsilon^2)$ and depth $\Theta(\log(d/\delta))$ satisfies,
	\[
	\lpnorm{\b{w}-\b{w}_{\mathrm{cs}}}{\infty} \le \epsilon \norm{\b{w}}.
	\]
\end{lemma}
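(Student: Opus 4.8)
The plan is to reprove the classical Count-Sketch $\ell_\infty$ guarantee from first principles: fix a single coordinate, bound the error of the single-row estimate by a second-moment (variance) calculation, boost the per-row success probability by taking the median over the $s=\Theta(\log(d/\delta))$ rows, and finally union bound over all $d$ coordinates to obtain the simultaneous $\ell_\infty$ bound. The width $b=\Theta(1/\epsilon^2)$ controls the variance, and the depth $s$ controls the median amplification.

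First I would fix a coordinate $i\in[d]$ and a row $j\in[s]$. Recalling that the per-row estimate is $\sigma_j(i)$ times the bucket $h_j(i)$ into which $i$ hashes, it decomposes as $w_i$ plus a noise term $Z_{i,j}=\sum_{i'\ne i}\b{1}[h_j(i')=h_j(i)]\,\sigma_j(i)\sigma_j(i')\,w_{i'}$ collecting the contributions of all coordinates colliding with $i$. Using pairwise independence of the signs (so $\E[\sigma_j(i)\sigma_j(i')]=0$ for $i'\ne i$) gives $\E[Z_{i,j}]=0$, and using pairwise independence of the signs together with their independence from the bucket hashes kills the off-diagonal cross terms in the second moment, leaving $\E[Z_{i,j}^2]=\sum_{i'\ne i}\Pr[h_j(i')=h_j(i)]\,w_{i'}^2=\frac{1}{b}\sum_{i'\ne i}w_{i'}^2\le \norm{\b{w}}^2/b$, where I used that pairwise independence of $h_j$ yields collision probability $1/b$. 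Fixing the width constant (say $b=3/\epsilon^2$) and applying Chebyshev's inequality gives $\Pr[\,|Z_{i,j}|>\epsilon\norm{\b{w}}\,]\le \E[Z_{i,j}^2]/(\epsilon^2\norm{\b{w}}^2)\le 1/3$, so each row lands within $\epsilon\norm{\b{w}}$ of $w_i$ with probability at least $2/3$.

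Next I would amplify via the median. Since the $s$ rows use independently drawn hash and sign functions, the failure events $\{|Z_{i,j}|>\epsilon\norm{\b{w}}\}$ are independent across $j$, each of probability at most $1/3$. The median of the $s$ per-row estimates deviates from $w_i$ by more than $\epsilon\norm{\b{w}}$ only if at least half the rows fail; as the expected number of failures is at most $s/3$, a Chernoff bound gives $\Pr[\,|\hat w_i-w_i|>\epsilon\norm{\b{w}}\,]\le e^{-cs}$ for an absolute constant $c>0$. Taking $s=\Theta(\log(d/\delta))$ with a large enough constant drives this per-coordinate failure probability below $\delta/d$, and a union bound over the $d$ coordinates then shows that with probability at least $1-\delta$ every coordinate is estimated to within $\epsilon\norm{\b{w}}$ simultaneously, i.e. $\lpnorm{\b{w}-\b{w}_{\mathrm{cs}}}{\infty}\le\epsilon\norm{\b{w}}$.

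The main obstacle — indeed the only step requiring genuine care — is the second-moment computation together with the independence bookkeeping it rests on: one must check that the sign symmetrization simultaneously annihilates the mean of $Z_{i,j}$ and the off-diagonal ($i'\ne i''$) terms of $\E[Z_{i,j}^2]$, which requires the signs to be (at least) pairwise independent and independent of the bucket hashes, while the diagonal terms collapse to the collision probabilities $1/b$ under pairwise independence of $h_j$. Everything downstream (Chebyshev, the Chernoff median boosting, and the final union bound) is routine; the only quantitative choices are the width and depth constants, set precisely so that the per-row error probability is a constant strictly below $1/2$ and the median failure probability falls below $\delta/d$.
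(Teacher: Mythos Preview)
Your proof is correct and follows the standard argument from \citet{charikar2002finding}. Note, however, that the paper does not actually prove this lemma: it is stated with a citation to the original Count-Sketch paper and used as a black box, so there is no ``paper's own proof'' to compare against. Your variance-plus-Chebyshev computation for a single row, median amplification via Chernoff across independent rows, and final union bound over coordinates is exactly the classical derivation that the citation points to.
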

Hence using Lemma \ref{lem:count-sketch} for the  matrix $\sqrt{s}{R}$, with failure probability $\delta'$,
\[
\lpnorm{\b{w}_*-\b{w}_{\mathrm{cs}}}{\infty} \le \sqrt{\theta} \norm{\b{w}_*}.
\]
Now using the triangle inequality and Eq. \ref{eq:e3}, with failure probability $2\delta'$ (due to a union bound),
\begin{align*}
\lpnorm{\b{w}_*-\b{w}_{\mathrm{est}}}{\infty}&\le \lpnorm{\b{w}_*-\b{w}_{\mathrm{est}}}{\infty}+\lpnorm{\b{w}_{\mathrm{est}}-\b{w}_{\mathrm{cs}}}{\infty}\\
&\le \sqrt{\theta} \norm{\b{w}_*} + \sqrt{\frac{8\beta \gamma^2\theta\log(d/\delta')}{\lambda}}\| \b{w}_* \|_1\\
&\le \Big( \sqrt{\theta} + \sqrt{\frac{8\beta \gamma^2\theta\log(d/\delta')}{\lambda}}\Big)\| \b{w}_* \|_1.
\end{align*}
Therefore choosing $\theta = \min\{1,\lambda / (4\beta \gamma^2\log(d/\delta'))\}\epsilon^2/4$, with failure probability $2\delta'$,
\[
\lpnorm{\b{w}_*-\b{w}_{\mathrm{est}}}{\infty} \le \epsilon \| \b{w}_* \|_1.
\]
Choosing $\delta'=\delta/2$, for fixed constants $C',C''$ and 
\begin{align*}
k &= {(C'/\epsilon^4)\log^3(d/\delta)\max\{1,\beta^2\gamma^4/\lambda^2\}},\\
s &={(C''/\epsilon^2)\log^2(d/\delta)\max\{1,\beta\gamma^2/\lambda\}},
\end{align*}
with probability $1-\delta$,
\[
\lpnorm{\b{w}_*-\b{w}_{\mathrm{est}}}{\infty} \le \epsilon \| \b{w}_* \|_1.
\]

\subsection{Proof of Theorem \ref{thm:online}}\label{sec:ol_proof}

Let $f_t(\b{z})$ be the loss function corresponding to the data point chosen in the $t$th time step---
\begin{align}
f_t(\mathbf{z})=\ell\left(y_t \mathbf{z}^T {R}\mathbf{x}_t\right) + \frac{\lambda}{2}\|\mathbf{z}\|_2^2. \label{eq:ft}
\end{align}
Let ${\b{z}}_t$ be the weight vector at the $t$th time step for online updates on the projected problem. Let $\bar{\b{z}}=\frac{1}{{T}}\sum_{i=1}^{{T}}\hat{\b{z}}_i$ be the average of the weight vectors for all the ${T}$ time steps. We claim that $\bar{\b{z}}$ is close to ${\b{z}}_{*}$, the optimizer of $\hat{L}(\b{z})$, using Corollary 1 of \citet{shamir2016sgd}. In order to apply the result we first need to define a few parameters of the function $\hat{L}(\b{z})$. Note that $\hat{L}(\b{z})$ is $\lambda$ strongly-convex (as $\hat{L}(\b{z})-\frac{\lambda}{2}\normsq{\b{z}}$ is convex. As the derivative of $\ell$ is bounded above by $H$, therefore $\ell$ is $H$-Lipschitz. We assume $\norm{{R}\b{x}_i}\le B, \norm{\b{z}_*}\le {D}$ and $\max_t \norm{\nabla f_t(\mathbf{w})}\le G$. We will bound $B,D$ and $G$ in the end. We now apply Corollary 1 of \citet{shamir2016sgd}, with the notation adapted for our setting.

\begin{lemma}
	\cite{shamir2016sgd} Consider any loss function $\hat{L}(\b{z})= \sum_{i=1}^{T}f_t(\b{z})$, where $f_t(\mathbf{z})$ is defined in Eq. \ref{eq:ft}. For any $H$-Lipchitz $\ell_i$, $\norm{R\b{x}_i}\le B$, $\norm{\b{z}_t}\le D$, and some fixed constant $C$, over the randomness in the order in which the samples are received:
	\[
	\E\Big[\frac{1}{{T}}\sum_{t=1}^{{T}}\hat{L}({\b{z}}_t) -\hat{L}({\b{z}}_*)\Big]  \le \frac{C(R_T/\sqrt{T} + B{D}H)}{\sqrt{{T}}},
	\]
	where $R_T$ is the regret of online gradient descent with respect to the batch optimizer $\b{z}_*$, defined as $R_T = \sum_{t=1}^{{T}}[f_t(\hat{\b{z}})-f_t(\b{z}_*)]$.
\end{lemma}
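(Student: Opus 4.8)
The plan is a without-replacement online-to-batch conversion: run online gradient descent on the realized permuted sequence $f_{\pi(1)},\dots,f_{\pi(T)}$, invoke the deterministic OGD regret guarantee, and then convert the regret into a bound on the average empirical risk $\frac{1}{T}\sum_t \hat L(\b z_t)$ by taking expectation over the uniformly random permutation $\pi$. Concretely, for every fixed permutation I would add and subtract the per-step losses to write
\begin{align*}
\frac{1}{T}\sum_{t=1}^T\big(\hat L(\b z_t)-\hat L(\b z_*)\big)
&= \underbrace{\frac{1}{T}\sum_{t=1}^T\big(\hat L(\b z_t)-f_{\pi(t)}(\b z_t)\big)}_{\text{(I) without-replacement bias}} \\
&\quad + \underbrace{\frac{1}{T}\sum_{t=1}^T\big(f_{\pi(t)}(\b z_t)-f_{\pi(t)}(\b z_*)\big)}_{\text{(II) regret}}
 + \underbrace{\frac{1}{T}\sum_{t=1}^T\big(f_{\pi(t)}(\b z_*)-\hat L(\b z_*)\big)}_{\text{(III)}}.
\end{align*}

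The two outer terms are easy to dispose of. Term (II) equals $R_T/T$ by the definition of the regret $R_T$ of OGD against the fixed comparator $\b z_*$ on the realized sequence; the standard OGD analysis (using only convexity of each $f_{\pi(t)}$, the bound $\norm{\b z_t}\le D$, the gradient bound $G$, and the $1/\sqrt{t}$ step-size schedule) controls $R_T$, and dividing the $R_T/\sqrt{T}$ summand in the claimed bound by $\sqrt T$ reproduces exactly this $R_T/T$ contribution. Term (III) vanishes \emph{identically} for every permutation, since $\b z_*$ is fixed and $\frac{1}{T}\sum_t f_{\pi(t)}(\b z_*)=\frac{1}{T}\sum_{i=1}^T f_i(\b z_*)=\hat L(\b z_*)$; hence its expectation is $0$.

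The entire difficulty lies in bounding the expectation of the bias term (I), and this is where the without-replacement structure enters. Writing $S_{t-1}=\{\pi(1),\dots,\pi(t-1)\}$ for the indices consumed before step $t$, the key observation is that $\b z_t$ depends only on $S_{t-1}$, whereas, conditioned on $S_{t-1}$, the next index $\pi(t)$ is uniform over the $T-t+1$ \emph{unused} points. Taking the conditional expectation therefore replaces $f_{\pi(t)}(\b z_t)$ by the average of $f_i(\b z_t)$ over the unused points, so that
\begin{align*}
\E\big[\hat L(\b z_t)-f_{\pi(t)}(\b z_t)\big]
= \E\Big[\tfrac{t-1}{T}\big(U_t-R_t\big)\Big],
\end{align*}
where $U_t=\frac{1}{t-1}\sum_{i\in S_{t-1}}f_i(\b z_t)$ is the average loss of $\b z_t$ on the points it was trained on and $R_t=\frac{1}{T-t+1}\sum_{i\notin S_{t-1}}f_i(\b z_t)$ its average loss on the held-out points. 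This reduces (I) to a train-minus-held-out gap along the trajectory. I would bound $\E[U_t-R_t]$ by an algorithmic-stability argument: since each gradient step moves the iterate by at most $\eta_t G$ and each $f_i$ is $H$-Lipschitz (with $\norm{R\b x_i}\le B$ entering $G$), replacing one consumed example by an unused one perturbs $\b z_t$ by only $O(\eta_t)$ and hence perturbs the $H$-Lipschitz losses correspondingly little; summing these perturbations with the step-size schedule and the $\frac{t-1}{T}$ weighting yields the $O(BDH/\sqrt T)$ correction, i.e. the $CBDH/\sqrt T$ summand of the statement.

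The main obstacle is precisely this last step. A naive Cauchy--Schwarz bound on $U_t-R_t$ using only $\norm{\b z_t-\b z_*}\le 2D$ and the gradient bound gives an $O(BDH)$ contribution with no $1/\sqrt T$ gain, so the argument must exploit the exchangeability/martingale structure of the random permutation (equivalently, the slow and stable movement of the iterates) to obtain the needed $1/\sqrt T$ concentration of the used-versus-unused average discrepancy. This is exactly the content of Corollary~1 of \citet{shamir2016sgd}; having set up the decomposition above, I would either invoke that corollary directly with $B$, $D$, $H$ instantiated as here, or reproduce its stability estimate to close the bound.
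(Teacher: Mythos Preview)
The paper does not prove this lemma at all: it is stated as a direct citation of Corollary~1 in \citet{shamir2016sgd} and then immediately applied as a black box in the proof of Theorem~\ref{thm:online}. Your proposal ends in the same place---you explicitly say you would ``invoke that corollary directly''---so at the level of what is actually established, your approach and the paper's coincide.

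Where you go further is in sketching the three-term decomposition and isolating the without-replacement bias term (I). That decomposition is correct, and your computation that $\E[\hat L(\b z_t)-f_{\pi(t)}(\b z_t)\mid S_{t-1}]=\tfrac{t-1}{T}(U_t-R_t)$ is right. The part that is not a complete argument is the stability step: swapping one consumed example for an unused one and bounding the resulting perturbation of $\b z_t$ is plausible heuristics, but turning it into an $O(1/\sqrt T)$ bound on the \emph{averaged} used-versus-unused gap $\frac{1}{T}\sum_t \tfrac{t-1}{T}(U_t-R_t)$ requires more than a per-step $O(\eta_t G)$ estimate---one needs to control how the dependence of $\b z_t$ on the specific identity (not just cardinality) of $S_{t-1}$ interacts with the averaging, and Shamir's actual argument proceeds somewhat differently (via a coupling to the with-replacement process / transductive concentration rather than pure leave-one-out stability). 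Since you already flag this as ``the main obstacle'' and fall back on citing the corollary, there is no genuine gap in your proposal relative to the paper; you simply provide more exposition around a result that both you and the paper ultimately import.
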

By standard regret bounds on online gradient descent (see \citet{zinkevich2003online}), $
R_T \le G{D}\sqrt{T}$. Therefore,
\[
\E\Big[\frac{1}{{T}}\sum_{t=1}^{{T}}\hat{L}({\b{z}}_t) -\hat{L}({\b{z}}_*)\Big]  \le \frac{C{D}(G + BH)}{\sqrt{{T}}}.
\]
Note that by Jensen's inequality,
\begin{align}
&\E[\hat{L}({\b{z}})]\le \E\Big[\frac{1}{{T}}\sum_{t=1}^{{T}}\hat{L}({\b{z}}_t)\Big]\nonumber\\
\implies &\E\Big[\hat{L}(\bar{\b{z}}) -\hat{L}({\b{z}}_*)\Big]  \le \frac{C{D}(G + BH)}{\sqrt{{T}}}.\label{eq:ol_bnd}
\end{align}
We will now bound the expected distance between $\bar{\b{z}}$ and ${\b{z}}_*$ using Eq. \ref{eq:ol_bnd} and the strong convexity of $\hat{L}(\b{w})$. As $\hat{L}(\b{w})$ is $\lambda$ strong-convex and $\nabla\hat{L}(\b{z}_*)=0$, we can write: 
\begin{align*}
&\hat{L}({\b{z}}_*) + ({\lambda}/{2})\normsq{\bar{\b{z}}-{\b{z}}_*}\le \hat{L}(\bar{\b{z}})\\
\implies &\normsq{\bar{\b{z}}-\b{z}_*} \le ({\lambda}/{2})[\hat{L}(\bar{\b{z}})-\hat{L}(\b{z}_*)]\\
\implies &\E\Big[\normsq{\bar{\b{z}}-\b{z}_*}\Big] \le (2/{\lambda})\Big[\E[\hat{L}(\bar{\b{z}})]-\hat{L}(\b{z}_*)\Big].
\end{align*}
Using Eq. \ref{eq:ol_bnd} and then Jensen's inequality,
\begin{align}
\E\Big[\norm{\bar{\b{z}}-\b{z}_*}\Big] \le \frac{2C{D}(G+BH)}{\lambda\sqrt{{T}}}. \label{eq:sgd_edist}
\end{align}
Let $\bar{\b{z}}_{\mathrm{proj}}=\sqrt{s}\bar{\b{z}}$. Let $\b{z}_{\mathrm{wm}}$ is the Count-Sketch estimate of $\b{w}_*$ derived from $\bar{\b{z}}_{\mathrm{proj}}$. Recall from the proof of Theorem \ref{thm:batch-recovery} that ${\b{z}}_{\mathrm{proj}}=\sqrt{s}{\b{z}}$ and $\b{w}_{\mathrm{est}}$ is the Count-Sketch estimate of $\b{w}_*$ derived from ${\b{z}}_{\mathrm{proj}}$. As in the proof of Theorem \ref{thm:batch-recovery}, we note that the difference between the medians of any two vectors is at most the $\ell_{\infty}$ norm of the difference of the vectors, and hence we can write,
\begin{align*}
\lpnorm{\b{w}_{\mathrm{est}}-\b{z}_{\mathrm{wm}}}{\infty}&\le\lpnorm{\b{z}_{\mathrm{proj}}-\bar{\b{z}}_{\mathrm{proj}}}{\infty}\le \norm{\b{z}_{\mathrm{proj}}-\bar{\b{z}}_{\mathrm{proj}}}\\
&= \sqrt{s}\norm{\b{z}_*-\bar{\b{z}}}.
\end{align*}
Therefore, using Eq. \ref{eq:sgd_edist}, 
\begin{align}
\E[\lpnorm{\b{w}_{\mathrm{est}}-\b{z}_{\mathrm{wm}}}{\infty}] \le \frac{2C{D}(G+BH)}{\lambda}\sqrt{\frac{s}{{T}}}.
\end{align}
By the triangle inequality, 
\begin{align}
&\lpnorm{\b{w}_*-\b{z}_{\mathrm{wm}}}{\infty}\le \lpnorm{\b{w}_*-\b{w}_{\mathrm{est}}}{\infty}+\lpnorm{\b{w}_{\mathrm{est}}-\b{z}_{\mathrm{wm}}}{\infty}\nonumber\\
\implies &\E\Big[\lpnorm{\b{w}_*-\b{z}_{\mathrm{wm}}}{\infty}\Big]\le \E\Big[\lpnorm{\b{w}_*-\b{w}_{\mathrm{est}}}{\infty}\Big]+\E\Big[\lpnorm{\b{w}_{\mathrm{est}}-\b{z}_{\mathrm{wm}}}{\infty}\Big]\nonumber
\end{align}
By Theorem \ref{thm:batch-recovery}, for fixed constants $C_1,C_2$ and 
\begin{align*}
k&= {(C_1/\epsilon^4)\log^3(d/\delta)\max\{1,\beta^2\gamma^4/\lambda^2\}},\\
s&={(C_2/\epsilon^2)\log^2(d/\delta)\max\{1,\beta\gamma^2/\lambda\}},\\
\lpnorm{\b{w}_*-\b{w}_{\mathrm{est}}}{\infty} &\le \epsilon \| \b{w}_* \|_1,
\end{align*}
with probability $1-\delta$. Therefore, for fixed constants $C_1'$ and $C_2'$ and probability $1-\delta$,
\begin{align*}
\E\Big[\lpnorm{\b{w}_*-\b{z}_{\mathrm{wm}}}{\infty}\Big] &\le \frac{\epsilon}{2}\lpnorm{\b{w}_*}{1}\\ +& \sqrt{\frac{4C_2'(G{D}+B{D}H)^2\log^2(d/\delta)\max\{1,LR^2/\lambda\}}{\lambda^2 \epsilon^2{T}}}.
\end{align*}
Therefore for 
\[T\ge (C_3'/(\epsilon^4\lambda^2))({D}/\lpnorm{\b{w}_*}{1})^2(G+BH)^2\log^2(d/\delta)\max\{1,LR^2/\lambda\},
\]
\[
\E\Big[\lpnorm{\b{w}_*-\b{z}_{\mathrm{wm}}}{\infty}\Big] \le \frac{\epsilon}{2}\lpnorm{\b{w}_*}{1} +\frac{\epsilon}{2}\lpnorm{\b{w}_*}{2} \le {\epsilon}\lpnorm{\b{w}_*}{1}.
\]
We will now bound $B,D$ and $G$, starting with $B$. Note that ${R}$ is a JL matrix which satisfies condition \ref{eq:ip_preserve} with $\epsilon=\theta$. Using Lemma \ref{lem:jl_l1} and the fact that $\norm{\b{x}_i}\le 1$,
\[
\norm{{R}\b{x}_i} \le \sqrt{1+\theta \gamma^2} \implies B \le 1+\sqrt{\theta}\gamma \le 1+\epsilon\gamma,
\]
where for the last bound we use the setting of 
\[
\theta=\min\{1,\lambda / (4\beta \gamma^2\log(d/\delta'))\}\epsilon^2/4
\]
from the proof of Theorem \ref{thm:batch-recovery}. We next bound $\norm{\b{z}_*}$. Using Lemma \ref{lem:jl_l1},
\begin{align*}
\norm{\b{z}_*-{R}\b{w}_*}\le 2R\theta\sqrt{{\beta}/{\lambda}}\lpnorm{\b{w}_*}{1}\\
\implies \norm{\b{z}_*}\le \norm{{R}\b{w}_*} + 2R\theta\sqrt{{\beta}/{\lambda}}\lpnorm{\b{w}_*}{1}.
\end{align*}
By Lemma \ref{lem:jl_l1}, $\norm{{R}\b{w}_*} \le \sqrt{\normsq{\b{w}_*}+\theta\lpnorm{\b{w}_*}{1}^2}\le \norm{\b{w}_*}+\sqrt{\theta}\lpnorm{\b{w}_*}{1}$. Therefore,
\begin{align*}
\norm{\b{z}_*}&\le \norm{\b{w}_*}+\sqrt{\theta}\lpnorm{\b{w}_*}{1}+ 2R\theta\sqrt{{\beta}/{\lambda}}\lpnorm{\b{w}_*}{1}\\
&=\norm{\b{w}_*}+\Big(\sqrt{\theta}+ 2R\theta\sqrt{{\beta}/{\lambda}}\Big)\lpnorm{\b{w}_*}{1}.
\end{align*}
For our choice of $\theta$, 
\[
\norm{\b{z}_*} \le \norm{\b{w}_*}+\epsilon\lpnorm{\b{w}_*}{1} \implies D \le D_2 + \epsilon D_1.
\]
This implies that the $(D/\lpnorm{\b{w}_*}{1})$ term in our bound for $T$ can be upper bounded by $2D_2/\lpnorm{\b{w}_*}{1}$, yielding the bound on $T$ stated in Theorem \ref{thm:online}. Finally, we need to upper bound $G=\max_t \norm{\nabla f_t(\mathbf{w})}$. We do this as follows:
 \begin{align*}
 {\nabla f_t(\mathbf{z})} &= {\ell'(y_t\b{z}_t^T{R}\b{x_t})A\b{x_t}+\lambda \b{z}_t}\\
 \implies \norm{\nabla f_t(\mathbf{z})} &\le {|\ell'(y_t\b{z}_t^T{R}x)|\norm{{R}x}+\lambda \norm{\b{z}_t}}\\
 &\le H(1+\epsilon\gamma) + \lambda D.
 \end{align*}
\section{$k$-Independence of Hash Functions}


Our analysis of the WM-Sketch requires hash functions that are $O(\log(d/\delta))$-wise independent. While hash functions satisfying this level of independence can be constructed using polynomial hashing \citep{carter1977universal}, hashing each input value would require time $O(\log(d/\delta))$, which can be costly when the dimension $d$ is large. Instead of satisfying the full independence requirement, our implementation simply uses fast, 3-wise independent tabulation hashing. In our experiments, we did not observe any significant degradation in performance from this choice of hash function.
\newpage
\section{Baseline Algorithms}
\label{sec:baseline-algs}

Here we give pseudocode for the simple truncation and probabilistic truncation baselines evaluated in our experiments.

\begin{algorithm}
  \SetAlgoLined
  \SetAlgoNoEnd
  \DontPrintSemicolon
  
  \newcommand\mycommfont[1]{\rmfamily{#1}}
  \SetCommentSty{mycommfont}
  \SetKwComment{Comment}{$\triangleright$ }{}
  
  \Input{loss function~$\ell$, budget~$K$, $\ell_2$-regularization parameter~$\lambda$, learning rate schedule~$\eta_t$}
  
  \Init{}{
  	  $S\leftarrow \{\}\quad$ \Comment*[l]{Empty heap}
      $t \leftarrow 0 $\;
  }

  \SetKwFunction{FDot}{Dot}
  \SetKwFunction{FTruncate}{Truncate}
  \SetKwFunction{FUpdate}{Update}
  \Fn{\FUpdate{$\mathbf{x}$, $y$}}{
  	$\tau \leftarrow \sum_{i \in S} S[i] \cdot x_i\quad$ \Comment*[f]{Make prediction}
    $S \leftarrow (1 - \lambda \eta_t)S - \eta_t yx_i \nabla\ell(y\tau)$ \;
	Truncate $S$ to top-$K$ entries by magnitude\;
    $t \leftarrow t + 1$\;
  }

  \caption{Simple Truncation}
  \label{alg:truncation}
\end{algorithm}

\begin{algorithm}
  \SetAlgoLined
  \SetAlgoNoEnd
  \DontPrintSemicolon
  
  \newcommand\mycommfont[1]{\rmfamily{#1}}
  \SetCommentSty{mycommfont}
  \SetKwComment{Comment}{$\triangleright$ }{}
  
  \Input{loss function~$\ell$, budget~$K$, $\ell_2$-regularization parameter~$\lambda$, learning rate schedule~$\eta_t$}
  
  \Init{}{
  	  $S_0 \leftarrow \{\}$ \Comment*[f]{Empty heap}
	   
	  $W \leftarrow \{\}\quad$ \Comment*[f]{Reservoir weights}
	  
	  $t \leftarrow 0 $\;
  }

  \SetKw{KwAnd}{and}
  \SetKwFunction{FTruncate}{Truncate}
  \SetKwFunction{FUpdate}{Update}
  \Fn{\FUpdate{$\mathbf{x}$, $y$}}{
  	$\tau \leftarrow \sum_{i \in S_t} S_t[i] \cdot x_i\quad$ \Comment*[f]{Make prediction}
	
	$S_{t+1} \leftarrow (1 - \lambda \eta_t)S_t - \eta_t y\mathbf{x}\nabla\ell(y\tau)$\;
    \For{$i \in S_{t+1}$}{    
      \eIf{$i \not\in S_t$}{
    	$r \sim \mathcal{U}(0, 1)$\;
	
		$W[i] \leftarrow r^{1 / |S_{t+1}[i]|}$ \Comment*[f]{New reservoir weight}
      }{
        $W[i] \leftarrow W[i]^{|S_t[i] / S_{t+1}[i]|}$ \Comment*[f]{Update weight}
      }
    }
	Truncate $S_{t+1}$ to top-$K$ entries by reservoir weight\;
    $t \leftarrow t + 1$\;
  }

  \caption{Probabilistic Truncation}
  \label{alg:prob-truncation}
\end{algorithm}

\end{document}